\newcommand*{\COLT}{}
\newcommand*{\CAMREADY}{}
	\newtheorem{lemma}{Lemma}
	\newtheorem{corollary}{Corollary}
	\newtheorem{theorem}{Theorem}
\def\be{\begin{equation}}
\def\ee{\end{equation}}
\def\beas{\begin{eqnarray*}}
\def\eeas{\end{eqnarray*}}
\def\bea{\begin{eqnarray}}
\def\eea{\end{eqnarray}}
\newcommand{\x}{{\mathbf x}}
\newcommand{\y}{{\mathbf y}}
\newcommand{\uu}{{\mathbf u}}
\newcommand{\vv}{{\mathbf v}}
\newcommand{\w}{{\mathbf w}}
\newcommand{\e}{{\mathbf e}}
\newcommand{\aaa}{{\mathbf a}}
\newcommand{\bb}{{\mathbf b}}
\newcommand{\1}{{\mathbf 1}}
\newcommand{\0}{{\mathbf 0}}
\newcommand{\A}{{\mathcal A}}
\newcommand{\B}{{\mathcal B}}
\newcommand{\D}{{\mathcal D}}
\newcommand{\E}{{\mathcal E}}
\newcommand{\F}{{\mathcal F}}
\newcommand{\G}{{\mathcal G}}
\newcommand{\HH}{{\mathcal H}}
\newcommand{\M}{{\mathcal M}}
\newcommand{\T}{{\mathcal T}}
\newcommand{\X}{{\mathcal X}}
\newcommand{\Y}{{\mathcal Y}}
\newcommand{\EE}{{\mathbb E}}
\newcommand{\R}{{\mathbb R}}
\newcommand{\N}{{\mathbb N}}
\newcommand{\mubf}{{\boldsymbol{\mu}}}
\newcommand{\sigmabf}{{\boldsymbol{\sigma}}}
\newcommand{\abs}[1]{\left\lvert#1 \right\rvert}
\newcommand{\indc}[1]{\mathbbm{1}\left[#1\right]}
\DeclareMathOperator*{\argmax}{argmax} 
\DeclareMathOperator*{\argmin}{argmin}
\begin{document}

\ifdefined\NIPS
	\title{On the Expressive Power of Deep Learning: A Tensor Analysis}
	\author{
	Nadav Cohen \\
	The Hebrew University of Jerusalem \\
	\texttt{cohennadav@cs.huji.ac.il} \\
	\And 
	Or Sharir \\
	The Hebrew University of Jerusalem \\
	\texttt{or.sharir@cs.huji.ac.il} \\
	\And 
	Amnon Shashua \\
	The Hebrew University of Jerusalem \\
	Mobileye N.V.\\
	OrCam Ltd.\\
	\texttt{shashua@cs.huji.ac.il} \\
	}
	\maketitle
\fi
\ifdefined\CVPR
	\title{On the Expressive Power of Deep Learning: A Tensor Analysis}
	\author{
	Nadav Cohen \\
	The Hebrew University of Jerusalem \\	
	\texttt{cohennadav@cs.huji.ac.il} \\
	\and
	Or Sharir \\
	The Hebrew University of Jerusalem \\
	\texttt{or.sharir@cs.huji.ac.il} \\
	\and    
	Amnon Shashua \\
	The Hebrew University of Jerusalem \\
	Mobileye N.V.\\
	OrCam Ltd.\\
	\texttt{shashua@cs.huji.ac.il} \\
	}
	\maketitle
\fi
\ifdefined\AISTATS
	\twocolumn[
	\aistatstitle{On the Expressive Power of Deep Learning: A Tensor Analysis}
	\ifdefined\CAMREADY
		\aistatsauthor{Nadav Cohen \And Or Sharir \And Amnon Shashua}
		\aistatsaddress{The Hebrew University of Jerusalem \And The Hebrew University of Jerusalem \And The Hebrew University of Jerusalem}
	\else
		\aistatsauthor{Anonymous Author 1 \And Anonymous Author 2 \And Anonymous Author 3}
		\aistatsaddress{Unknown Institution 1 \And Unknown Institution 2 \And Unknown Institution 3}
	\fi
	]	
\fi
\ifdefined\COLT
	\title{On the Expressive Power of Deep Learning: A Tensor Analysis}
	\coltauthor{\Name{Nadav Cohen} \Email{cohennadav@cs.huji.ac.il}\\
	\Name{Or Sharir} 				 \Email{or.sharir@cs.huji.ac.il}\\
	\Name{Amnon Shashua} 		 \Email{shashua@cs.huji.ac.il}\\
	\addr The Hebrew University of Jerusalem}
	\maketitle
\fi

\begin{abstract}

It has long been conjectured that hypotheses spaces suitable for data that is compositional in nature, such as text or images, may be more efficiently represented with deep hierarchical networks than with shallow ones. 
Despite the vast empirical evidence supporting this belief, theoretical justifications to date are limited.
In particular, they do not account for the locality, sharing and pooling constructs of convolutional networks, the most successful deep learning architecture to date.
In this work we derive a deep network architecture based on arithmetic circuits that inherently employs locality, sharing and pooling.
An equivalence between the networks and hierarchical tensor factorizations is established.   
We show that a shallow network corresponds to CP (rank-1) decomposition, whereas a deep network corresponds to Hierarchical Tucker decomposition.  
Using tools from measure theory and matrix algebra, we prove that besides a negligible set, all functions that can be implemented by a deep 
network of polynomial size, require exponential size in order to be realized (or even approximated) by a shallow network. 
Since log-space computation transforms our networks into SimNets, the result applies directly to a deep learning architecture demonstrating promising empirical performance.
The construction and theory developed in this paper shed new light on various practices and ideas employed by the deep learning community.
\ifdefined\CAMREADY
\else
	\footnote{The authors wish this manuscript to be considered for a student paper award.}
\fi

\end{abstract}

\ifdefined\COLT
	\medskip
	\begin{keywords}
	\emph{Deep Learning}, \emph{Expressive Power}, \emph{Arithmetic Circuits}, \emph{Tensor Decompositions}
	\end{keywords}
\fi

\section{Introduction} \label{sec:intro}

The expressive power of neural networks is achieved through depth. 
There is mounting empirical evidence that for a given budget of resources (e.g.~neurons), the deeper one goes, the better the eventual performance will be.
However, existing theoretical arguments that support this empirical finding are limited.
There have been many attempts to theoretically analyze function spaces generated by network architectures, and their dependency on
network depth and size.
The prominent approach for justifying the power of depth is to show that deep networks can efficiently express functions that would require shallow networks to have super-polynomial size.
We refer to such scenarios as instances of \emph{depth efficiency}.
Unfortunately, existing results dealing with depth efficiency (e.g.~\cite{hastad1986almost,Hastad91,bengio2011shallow,martens2014expressive}) typically apply to specific network architectures that do not resemble ones commonly used in practice.
In particular, none of these results apply to convolutional networks (\cite{lecun1995convolutional}), which represent the most empirically successful and widely used deep learning architecture to date.
A further limitation of current results is that they merely show \emph{existence} of depth efficiency (i.e.~of functions that are efficiently realizable with a certain depth but cannot be efficiently realized with shallower depths), without providing any information as to how frequent this property is.  
These shortcomings of current theory are the ones that motivated our work.

The architectural features that specialize convolutional networks compared to classic feed-forward fully-connected networks are threefold.
The first feature, \emph{locality}, refers to the connection of a neuron only to neighboring neurons in the preceding layer, as opposed to having the entire layer drive it.
In the context of image processing (the most common application of convolutional networks), locality is believed to reflect the inherent compositional structure of data~--~the closer pixels are in an image, the more likely they are to be correlated.
The second architectural feature of convolutional networks is \emph{sharing}, which means that different neurons in the same layer, connected to different neighborhoods in the preceding layer, share the same weights.
Sharing, which together with locality gives rise to convolution, is motivated by the fact that in natural images, the semantic meaning of a pattern often does not depend on its location (i.e.~two identical patterns appearing in different locations of an image often convey the same semantic content).
Finally, the third architectural idea of convolutional networks is \emph{pooling}, which is essentially an operator that decimates layers, replacing neural activations in a spatial window by a single value (e.g.~their maximum or average).
In the context of images, pooling induces invariance to translations (which often do not affect semantic content), and in addition is believed to create a hierarchy of abstraction in the patterns neurons respond to.
The three architectural elements of locality, sharing and pooling, which have facilitated the great success of convolutional networks, are all lacking in existing theoretical studies of depth efficiency.

In this paper we introduce a \emph{convolutional arithmetic circuit} architecture that incorporates locality, sharing and pooling.
Arithmetic circuits (also known as Sum-Product Networks,~\cite{Poon-Domingos2011}) are networks with two types of nodes: sum nodes, which compute a weighted sum of their inputs, and product nodes, computing the product of their inputs. 
We use sum nodes to implement convolutions (locality with sharing), and product nodes to realize pooling.
The models we arrive at may be viewed as convolutional networks with product pooling and linear point-wise activation.
They are attractive on three accounts.
First, as discussed in app.~\ref{app:simnets}, convolutional arithmetic circuits are equivalent to SimNets, a new deep learning architecture that has recently demonstrated promising empirical results on various image recognition benchmarks~(\cite{simnets2}).
Second, as we show in sec.~\ref{sec:cac}, convolutional arithmetic circuits are realizations of hierarchical tensor decompositions (see~\cite{Hackbusch-book}), opening the door to various mathematical and algorithmic tools for their analysis and implementation.
Third, the depth efficiency of convolutional arithmetic circuits, which we analyze in sec.~\ref{sec:theorems}, was shown in the subsequent work of~\cite{cohen2016convolutional} to be superior to the depth efficiency of the popular convolutional rectifier networks, namely convolutional networks with rectified linear (ReLU) activation and max or average pooling.

Employing machinery from measure theory and matrix algebra, made available through their connection to hierarchical tensor decompositions, we prove a number of fundamental results concerning the depth efficiency of our convolutional arithmetic circuits.
Our main theoretical result (thm.~\ref{thm:fundamental} and corollary~\ref{corollary:fundamental}) states that \emph{besides a negligible (zero measure) set, all functions that can be realized by a deep network of polynomial size, require exponential size in order to be realized, or even approximated, by a shallow network}.
When translated to the viewpoint of tensor decompositions, this implies that \emph{almost all} tensors realized by Hierarchical Tucker (HT) decomposition (\cite{Hackbusch:2009jj}) cannot be efficiently realized by the classic CP (rank-1) decomposition.
To the best of our knowledge, this result is unknown to the tensor analysis community, in which the advantage of HT over CP is typically demonstrated through \emph{specific examples} of tensors that can be efficiently realized by the former and not by the latter.
Following our main result, we present a generalization (thm.~\ref{thm:generalized} and corollary~\ref{corollary:generalized}) that compares networks of arbitrary depths, showing that the amount of resources one has to pay in order to maintain representational power while trimming down layers of a network grows double exponentially w.r.t. the number of layers cut off.
We also characterize cases in which dropping a single layer bears an exponential price.

The remainder of the paper is organized as follows.
In sec.~\ref{sec:preliminaries} we briefly review notations and mathematical background required in order to follow our work.
This is followed by sec.~\ref{sec:cac}, which presents our convolutional arithmetic circuits and establishes their equivalence with tensor decompositions.
Our theoretical analysis is covered in sec.~\ref{sec:theorems}.
Finally, sec.~\ref{sec:discussion} concludes.
In order to keep the manuscript at a reasonable length, we defer our detailed survey of related work to app.~\ref{app:related_work}, covering works on the depth efficiency of boolean circuits, arithmetic circuits and neural networks, as well as different applications of tensor analysis in the field of deep learning.

\section{Preliminaries} \label{sec:preliminaries}

We begin by establishing notational conventions that will be used throughout the paper.  
We denote vectors using bold typeface, e.g. $\vv \in \R^s$.  
The coordinates of such a vector are referenced with regular typeface and a subscript, e.g. $v_i \in \R$.  
This is not to be confused with \emph{bold} typeface and a subscript, e.g. $\vv_i \in \R^s$, which represents a vector that belongs to some sequence.  
Tensors (multi-dimensional arrays) are denoted by the letters ``A'' and ``B'' in calligraphic typeface, e.g. $\A, \B \in \R^{M_1 \times \cdots \times M_N}$.  
A specific entry in a tensor will be referenced with subscripts, e.g. $\A_{d_1{\ldots}d_N} \in \R$.  
Superscripts will be used to denote individual objects within a collection.  
For example, $\vv^{(i)}$ stands for vector $i$ and $\A^y$ stands for tensor $y$.  
In cases where the collection of interest is indexed by multiple coordinates, we will have multiple superscripts referencing individual objects, e.g. $\aaa^{l,j,\gamma}$ will stand for vector $(l,j,\gamma)$.  
As shorthand for the Cartesian product of the Euclidean space $\R^s$ with itself $N$ times, we will use the notation $(\R^s)^N$.  
Finally, for a positive integer $k$ we use the shorthand $[k]$ to denote the set $\{1,\ldots,k\}$.

We now turn to establish a baseline, i.e.~to present basic definitions and results, in the broad and comprehensive field of tensor analysis.  
We list here only the essentials required in order to follow the paper, referring the interested reader to~\cite{Hackbusch-book} for a more complete introduction to the field
\footnote{
The definitions we give are concrete special cases of the more abstract algebraic definitions given in~\cite{Hackbusch-book}.
We limit the discussion to these special cases since they suffice for our needs and are easier to grasp.
}.  
The most straightforward way to view a tensor is simply as a multi-dimensional array: $\A_{d_1,\ldots,d_N}\in\R$ where $i\in[N],d_i\in[M_i]$.
The number of indexing entries in the array, which are also called \emph{modes}, is referred to as the \emph{order} of the tensor.  
The term \emph{dimension} stands for the number of values an index can take in a particular mode.  
For example, the tensor $\A$ appearing above has order $N$ and dimension $M_i$ in mode $i$, $i\in[N]$.  
The space of all possible configurations $\A$ can take is called a \emph{tensor space} and is denoted, quite naturally, by $\R^{M_1 \times \cdots \times M_N}$.

A central operator in tensor analysis is the \emph{tensor product}, denoted $\otimes$.
This operator intakes two tensors $\A$ and $\B$ of orders $P$ and $Q$ respectively, and returns a tensor $\A \otimes \B$ of order $P+Q$, defined by: $\left(\A \otimes \B\right)_{d_1{\ldots}d_{P+Q}}=\A_{d_1{\ldots}d_P} \cdot \B_{d_{P+1}{\ldots}d_{P+Q}}$.
Notice that in the case $P=Q=1$, the tensor product reduces to an outer product between vectors.
Specifically, $\vv \otimes \uu$~--~the tensor product between $\uu \in \R^{M_1}$ and $\vv \in \R^{M_2}$, is no other than the rank-1 matrix $\vv \uu^\top \in \R^{M_1 \times M_2}$.  
In this context, we will often use the shorthand $\mathop{\otimes}_{i=1}^N \vv^{(i)}$ to denote the joint tensor product $\vv^{(1)} \otimes \cdots \otimes \vv^{(N)}$.  

Tensors of the form $\mathop{\otimes}_{i=1}^N \vv^{(i)}$ are called \emph{pure} or \emph{elementary}, and are regarded as having \emph{rank-1} (assuming $\vv^{(i)} \neq 0~~\forall i$).  
It is not difficult to see that any tensor can be expressed as a sum of rank-1 tensors:
\be
\A = \sum_{z=1}^Z \vv^{(1)}_z \otimes \cdots \otimes \vv^{(N)}_z \quad ,\vv_z^{(i)} \in \R^{M_i}  
\label{eq:cp_decomp_def}
\ee
A representation as above is called a CANDECOMP/PARAFAC decomposition of $\A$, or in short, a \emph{CP decomposition}
\footnote{ 
CP decomposition is regarded as the classic and most basic tensor decomposition, dating back to the beginning of the 20'th century (see~\cite{Kolda-Bader2009} for a historic survey).
}.
The \emph{CP-rank} of $\A$ is defined as the minimum number of terms in a CP decomposition, i.e. as the minimal $Z$ for which eq.~\ref{eq:cp_decomp_def} can hold.  
Notice that for a tensor of order 2, i.e. a matrix, this definition of CP-rank coincides with that of standard matrix rank.

A \emph{symmetric tensor} is one that is invariant to permutations of its indices.  
Formally, a tensor $\A$ of order $N$ which is symmetric will have equal dimension $M$ in all modes, and for every permutation $\pi:[N]\to[N]$ and indices $d_1{\ldots}d_N \in [M]$, the following equality will hold: $\A_{d_{\pi(1)}{\ldots}d_{\pi(N)}} = \A_{d_1{\ldots}d_N}$.  
Note that for a vector $\vv\in\R^M$, the tensor $\mathop{\otimes}_{i=1}^N \vv \in \R^{M\times \cdots \times M}$ is symmetric.  
Moreover, every symmetric tensor may be expressed as a linear combination of such (symmetric rank-1) tensors: $\A = \sum_{z=1}^Z \lambda_z \cdot \vv_z \otimes \cdots \otimes \vv_z$.
This is referred to as a \emph{symmetric CP decomposition}, and the \emph{symmetric CP-rank} is the minimal $Z$ for which such a decomposition exists.  
Since a symmetric CP decomposition is in particular a standard CP decomposition, the symmetric CP-rank of a symmetric tensor is always greater or equal to its standard CP-rank.
Note that for the case of symmetric matrices (order-$2$ tensors) the symmetric CP-rank and the original CP-rank are always equal.  

A repeating concept in this paper is that of \emph{measure zero}.  
More broadly, our analysis is framed in measure theoretical terms.  
While an introduction to the field is beyond the scope of the paper (the interested reader is referred to~\cite{jones2001lebesgue}), it is possible to intuitively grasp the ideas that form the basis to our claims.  
When dealing with subsets of a Euclidean space, the standard and most natural measure in a sense is called the \emph{Lebesgue measure}.  
This is the only measure we consider in our analysis.
A set of (Lebesgue) measure zero can be thought of as having zero ``volume'' in the space of interest.  
For example, the interval between $(0,0)$ and $(1,0)$ has zero measure as a subset of the 2D plane, but has positive measure as a subset of the 1D $x$-axis.  
An alternative way to view a zero measure set $S$ follows the property that if one draws a random point in space by some continuous distribution, the probability of that point hitting $S$ is necessarily zero.  
A related term that will be used throughout the paper is \emph{almost everywhere}, which refers to an entire space excluding, at most, a set of zero measure.

\section{Convolutional Arithmetic Circuits} \label{sec:cac}

We consider the task of classifying an instance $X=(\x_1,\ldots,\x_N)$, $\x_i \in \R^s$, into one of the categories $\Y:=\{1,\ldots,Y\}$.  
Representing instances as collections of vectors is natural in many applications.
In the case of image processing for example, $X$ may correspond to an image, and $\x_1\ldots\x_N$ may correspond to vector arrangements of (possibly overlapping) patches around pixels.  
As customary, classification is carried out through maximization of per-label score functions $\{h_y\}_{y\in\Y}$, i.e.~the predicted label for the instance $X$ will be the index $y\in\Y$ for which the score value $h_y(X)$ is maximal.  
Our attention is thus directed to functions over the instance space $\X:=\{(\x_1,\ldots,\x_N):\x_i \in \R^s\}=(\R^s)^N$.  
We define our hypotheses space through the following representation of score functions:
\be
h_{y}\left(\x_1,\ldots,\x_N\right)=\sum_{d_1{\ldots}d_N=1}^M\A_{d_1,\ldots,d_N}^{y}\prod_{i=1}^{N} f_{\theta_{d_i}}(\x_i)
\label{eq:score}
\ee
$f_{\theta_1}{\ldots}f_{\theta_M}:\R^s\to\R$ are referred to as \emph{representation functions}, selected from a parametric family $\F=\left\{ f_{\theta}:\R^s\to\R\right\} _{\theta\in\Theta}$.
Natural choices for this family are wavelets, radial basis functions (\emph{Gaussians}), and affine functions followed by point-wise activation (\emph{neurons}).
The \emph{coefficient tensor} $\A^y$ has order $N$ and dimension $M$ in each mode.
Its entries correspond to a basis of $M^N$ point-wise product functions $\{(\x_1,\ldots,\x_N)\mapsto\prod_{i=1}^N f_{\theta_{d_i}}(\x_i) \}_{d_1{\ldots}d_N\in[M]}$.
We will often consider fixed linearly independent representation functions $f_{\theta_1}{\ldots}f_{\theta_M}$.
In this case the point-wise product functions are linearly independent as well (see app.~\ref{app:hypo_space:preliminaries}), and we have a one to one correspondence between score functions and coefficient tensors.
To keep the manuscript concise, we defer the derivation of our hypotheses space (eq.~\ref{eq:score}) to app.~\ref{app:hypo_space}, noting here that it arises naturally from the notion of tensor products between $L^2$~spaces.

Our eventual aim is to realize score functions $h_y$ with a layered network architecture.  
As a first step along this path, we notice that $h_y(\x_1,\ldots,\x_N)$ is fully determined by the activations of the $M$ representation functions $f_{\theta_1}{\ldots}f_{\theta_M}$ on the $N$ input vectors $\x_1{\ldots}\x_N$.
In other words, given $\{f_{\theta_d}(\x_i)\}_{d\in[M],i\in[N]}$, the score $h_y(\x_1,\ldots,\x_N)$ is independent of the input.
It is thus natural to consider the computation of these $M{\cdot}N$ numbers as the first layer of our networks.
This layer, referred to as the \emph{representation layer}, may be conceived as a convolutional operator with $M$ channels, each corresponding to a different function applied to all input vectors (see fig.~\ref{fig:cp_model}).

Once we have constrained our score functions to have the structure depicted in eq.~\ref{eq:score}, learning a classifier reduces to estimation of the parameters $\theta_{1}{\ldots}\theta_{M}$, and the coefficient tensors $\A^1{\ldots}\A^Y$.  
The computational challenge is that the latter tensors are of order $N$ (and dimension $M$ in each mode), having an exponential number of entries ($M^N$ each).  
In the next subsections we utilize tensor decompositions (factorizations) to address this computational challenge, and show how they are naturally realized by convolutional arithmetic circuits. 

\subsection{Shallow Network as a CP Decomposition of $\A^y$} \label{sec:cp_model}

\begin{figure*}
\vspace{-5mm}
\begin{center}
\includegraphics[width=0.7\textwidth]{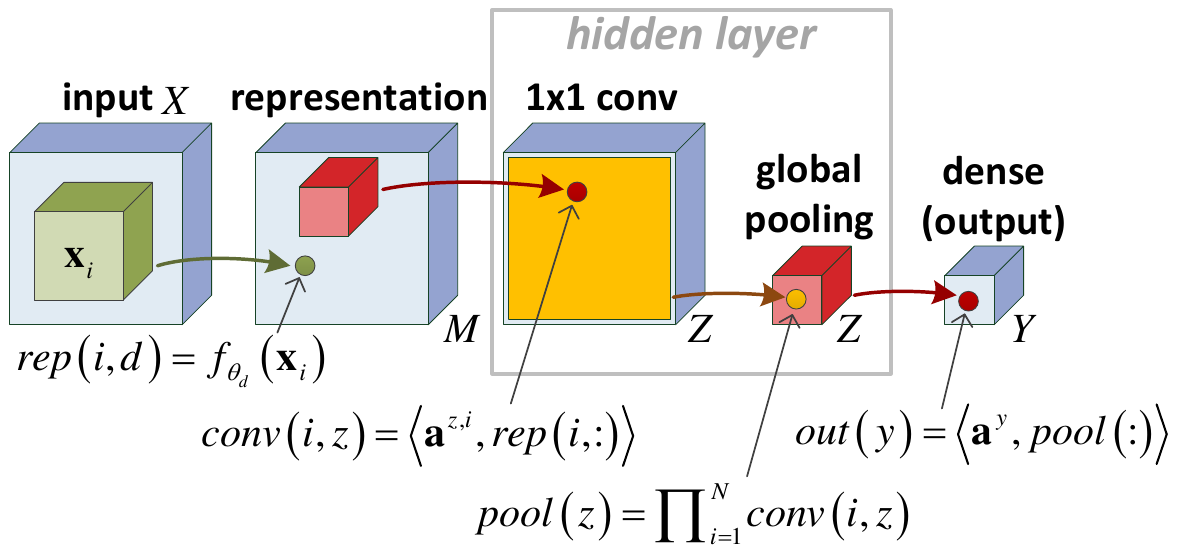}
\end{center}
\vspace{-5mm}
\caption{CP model~--~convolutional arithmetic circuit implementing CP (rank-1) decomposition.}
\label{fig:cp_model}
\end{figure*}

The most straightforward way to factorize a tensor is through a CP (rank-1) decomposition (see sec.~\ref{sec:preliminaries}).
Consider a joint CP decomposition for the coefficient tensors $\{\A^y\}_{y\in\Y}$:
\be
\A^y = \sum_{z=1}^Z a_z^y \cdot \aaa^{z,1} \otimes \cdots \otimes \aaa^{z,N}
\label{eq:cp_decomp}
\ee
where $\aaa^y\in\R^Z$ for $y\in\Y$ ($a_z^y$ stands for entry $z$ of $\aaa^y$), and $\aaa^{z,i}\in\R^M$ for $i\in[N],z\in[Z]$.  
The decomposition is joint in the sense that the same vectors $\aaa^{z,i}$ are shared across all classes $y$.  
Clearly, if we set $Z=M^N$ this model is universal, i.e. any tensors $\A^1{\ldots}\A^Y$ may be represented.  

Substituting our CP decomposition (eq.~\ref{eq:cp_decomp}) into the expression for the score functions in eq.~\ref{eq:score}, we obtain:
$$h_{y}(X)=\sum_{z=1}^{Z}a_z^y\prod_{i=1}^N \left(\sum_{d=1}^M a_d^{z,i} f_{\theta_d}(\x_i)\right)$$
From this we conclude that the network illustrated in fig.~\ref{fig:cp_model} implements a classifier (score functions) under the CP decomposition in eq.~\ref{eq:cp_decomp}.  
We refer to this network as \emph{CP model}.  
The network consists of a representation layer followed by a single hidden layer, which in turn is followed by the output.
The hidden layer begins with a \emph{$1\times1$ conv} operator, which is simply a 3D convolution with $Z$ channels and receptive field $1\times1$.
The convolution may operate without coefficient sharing, i.e.~the filters that generate feature maps by sliding across the previous layer may have different coefficients at different spatial locations.
This is often referred to in the deep learning community as a locally-connected operator (see~\cite{Taigman:2014vs}).
To obtain a standard convolutional operator, simply enforce coefficient sharing by constraining the vectors $\aaa^{z,i}$ in the CP decomposition (eq.~\ref{eq:cp_decomp}) to be equal to each other for different values of~$i$ (this setting is discussed in sec.~\ref{sec:shared}).
Following conv operator, the hidden layer includes global product pooling.
Feature maps generated by conv are reduced to singletons through multiplication of their entries, creating a vector of dimension $Z$.
This vector is then mapped into the $Y$ network outputs through a final dense linear layer.

To recap, CP model (fig.~\ref{fig:cp_model}) is a shallow (single hidden layer) convolutional arithmetic circuit that realizes the CP decomposition (eq.~\ref{eq:cp_decomp}).
It is universal, i.e.~it can realize any coefficient tensors with large enough size ($Z$).
Unfortunately, since the CP-rank of a generic tensor is exponential in its order (see~\cite{Hackbusch-book}), the size required for CP model to be universal is exponential ($Z$ exponential in $N$).

\subsection{Deep Network as a Hierarchical Decomposition of $\A^y$}  \label{sec:ht_model}

\begin{figure*}
\vspace{-5mm}
\begin{center}
\includegraphics[width=\textwidth]{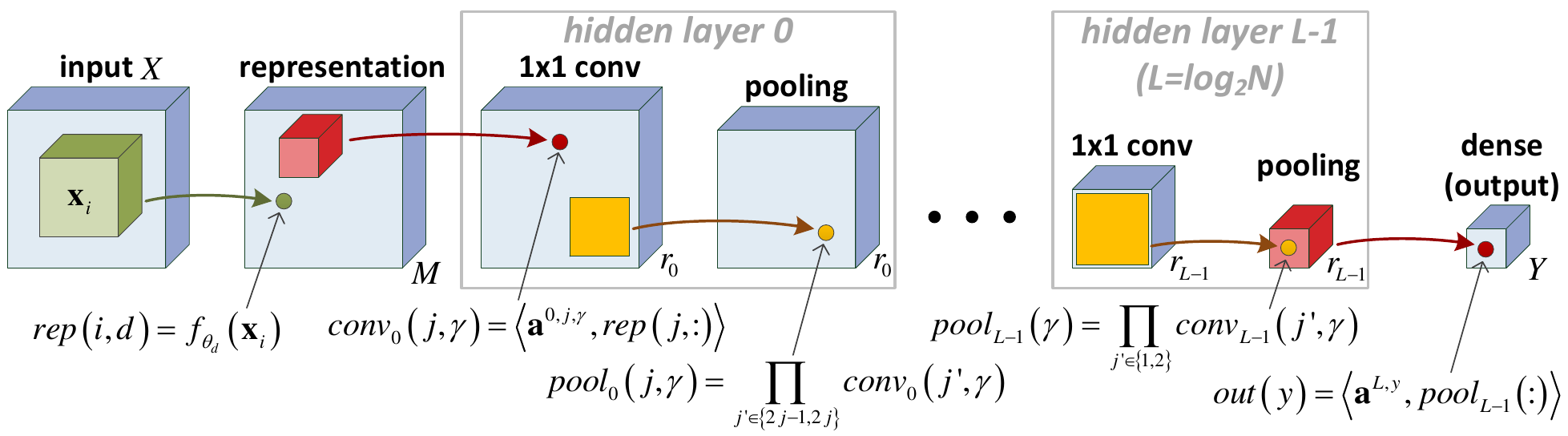}
\end{center}
\vspace{-5mm}
\caption{HT model~--~convolutional arithmetic circuit implementing hierarchical decomposition.}
\label{fig:ht_model}
\end{figure*}

In this subsection we present a deep network that corresponds to the recently introduced Hierarchical Tucker tensor decomposition (\cite{Hackbusch:2009jj}), which we refer to in short as \emph{HT decomposition}.  
The network, dubbed \emph{HT model}, is universal.  
Specifically, any set of tensors $\A^y$ represented by CP model can be represented by HT model with only a polynomial penalty in terms of resources.  
The advantage of HT model, as we show in sec.~\ref{sec:theorems}, is that in almost all cases it generates tensors that require an exponential size in order to be realized, or even approximated, by CP model.  
Put differently, if one draws the weights of HT model by some continuous distribution, with probability one, the resulting tensors cannot be approximated by a polynomial CP model.  
Informally, this implies that HT model is exponentially more expressive than CP model.

\medskip

HT model is based on the hierarchical tensor decomposition in eq.~\ref{eq:ht_decomp}, which is a special case of the HT decomposition as presented in~\cite{Hackbusch:2009jj} (in the latter's terminology, we restrict the matrices $A^{l,j,\gamma}$ to be diagonal).
Our construction and theoretical results apply to the general HT decomposition as well, with the specialization done merely to bring forth a network that resembles current convolutional networks
\footnote{
If we had not constrained $A^{l,j,\gamma}$ to be diagonal, pooling operations would involve entries from different channels.
}.

\bea
\phi^{1,j,\gamma} &=& \sum_{\alpha=1}^{r_0} a_\alpha^{1,j,\gamma} 
\aaa^{0,2j-1,\alpha} \otimes  \aaa^{0,2j,\alpha} 
\nonumber \\
&\cdots& 
\nonumber\\
\phi^{l,j,\gamma} &=& \sum_{\alpha=1}^{r_{l-1}} a_\alpha^{l,j,\gamma} 
\underbrace{\phi^{l-1,2j-1,\alpha}}_{\text{order $2^{l-1}$}} \otimes  
\underbrace{\phi^{l-1,2j,\alpha}}_{\text{order $2^{l-1}$}} 
\nonumber\\
&\cdots& 
\nonumber\\
\phi^{L-1,j,\gamma} &=& \sum_{\alpha=1}^{r_{L-2}} a_\alpha^{L-1,j,\gamma} 
\underbrace{\phi^{L-2,2j-1,\alpha}}_{\text{order $\frac{N}{4}$}} \otimes  
\underbrace{\phi^{L-2,2j,\alpha}}_{\text{order $\frac{N}{4}$}}  
\nonumber\\ 
\A^y &=& \sum_{\alpha=1}^{r_{L-1}} a_\alpha^{L,y} 
\underbrace{\phi^{L-1,1,\alpha}}_{\text{order $\frac{N}{2}$}} \otimes  
\underbrace{\phi^{L-1,2,\alpha}}_{\text{order $\frac{N}{2}$}}  
\label{eq:ht_decomp} 
\eea

The decomposition in eq.~\ref{eq:ht_decomp} recursively constructs the coefficient tensors $\{\A^y\}_{y\in[Y]}$ by assembling vectors $\{\aaa^{0,j,\gamma}\}_{j\in[N],\gamma\in[r_0]}$ into tensors $\{\phi^{l,j,\gamma}\}_{l\in[L-1],j\in[N/2^l],\gamma\in[r_l]}$ in an incremental fashion.  
The index $l$ stands for the level in the decomposition, $j$ represents the ``location'' within level $l$, and $\gamma$ corresponds to the
individual tensor in level $l$ and location $j$.  
$r_l$ is referred to as \emph{level-$l$ rank}, and is defined to be the number of tensors in each location of level $l$ (we denote for completeness $r_L:=Y$).  
The tensor $\phi^{l,j,\gamma}$ has order $2^l$, and we assume for simplicity that $N$~--~the order of $\A^y$, is a power of $2$ (this is merely a technical assumption also made in~\cite{Hackbusch:2009jj}, it does not limit the generality of our analysis).  

The parameters of the decomposition are the final level weights $\{\aaa^{L,y}\in\R^{r_{L-1}}\}_{y\in[Y]}$, the intermediate levels' weights $\{\aaa^{l,j,\gamma} \in \R^{r_{l-1}}\}_{l\in[L-1],j\in[N/2^l],\gamma\in[r_l]}$, and the first level vectors $\{\aaa^{0,j,\gamma}\in\R^M\}_{j\in[N],\gamma\in[r_0]}$.  
This totals at $N \cdot M \cdot r_0 + \sum_{l=1}^{L-1} \frac{N}{2^l} \cdot r_{l-1} \cdot r_l + Y \cdot r_{l-1}$ individual 
parameters, and if we assume equal ranks $r:=r_0 = \cdots = r_{L-1}$, the number of parameters becomes $N \cdot M \cdot r + N \cdot r^2 + Y \cdot r$.

The hierarchical decomposition (eq.~\ref{eq:ht_decomp}) is universal, i.e.~with large enough ranks $r_l$ it can represent any tensors.  
Moreover, it is a super-set of the CP decomposition (eq.~\ref{eq:cp_decomp}).
That is to say, all tensors representable by a CP decomposition having $Z$ components are also representable by a hierarchical decomposition with ranks $r_0=r_1=\cdots=r_{L-1}=Z$ 
\footnote{
To see this, simply assign the first level vectors $\aaa^{0,j,\gamma}$ with CP's basis vectors, the last level weights with CP's per-class weights, and the intermediate levels' weights with indicator vectors.
}.  
Note that this comes with a polynomial penalty~--~the number of parameters increases from $N \cdot M \cdot Z + Z\cdot Y$ in the CP decomposition, to $N \cdot M \cdot Z + Z\cdot Y + N\cdot Z^2$ in the hierarchical decomposition.  
However, as we show in sec.~\ref{sec:theorems}, the gain in expressive power is exponential.

Plugging the expression for $\A^y$ in our hierarchical decomposition (eq.~\ref{eq:ht_decomp}) into the score function $h_y$ given in eq.~\ref{eq:score}, we obtain the network displayed in fig.~\ref{fig:ht_model}~--~HT model. 
This network includes a representation layer followed by $L=\log_{2}N$ hidden layers which in turn are followed by the output.
As in the shallow CP model (fig.~\ref{fig:cp_model}), the hidden layers consist of $1\times1$ conv operators followed by product pooling.
The difference is that instead of a single hidden layer collapsing the entire spatial structure through global pooling, hidden layers now pool over size-$2$ windows, decimating feature maps by a factor of two (no overlaps).
After $L=\log_{2}N$ such layers feature maps are reduced to singletons, and we arrive at a 1D structure with $r_{L-1}$ nodes.
This is then mapped into $Y$ network outputs through a final dense linear layer.
We note that the network's size-$2$ pooling windows (and the resulting number of hidden layers $L=\log_{2}N$) correspond to the fact that our hierarchical decomposition (eq.~\ref{eq:ht_decomp}) is based on a full binary tree over modes, i.e.~it combines (through tensor product) two tensors at a time.
We focus on this setting solely for simplicity of presentation, and since it is the one presented in~\cite{Hackbusch:2009jj}.
Our analysis (sec.~\ref{sec:theorems}) could easily be adapted to hierarchical decompositions based on other trees (taking tensor products between more than two tensors at a time), and that would correspond to networks with different pooling window sizes and resulting depths.

HT model (fig.~\ref{fig:ht_model}) is conceptually divided into two parts.  
The first is the representation layer, transforming input vectors $\x_1{\ldots}\x_N$ into $N{\cdot}M$ real-valued scalars $\{f_{\theta_d}(\x_i)\}_{i\in[N],d\in[M]}$.  
The second and main part of the network, which we view as an ``inference'' engine, is the convolutional arithmetic circuit that takes the $N{\cdot}M$ measurements produced by the representation layer, and accordingly computes $Y$ class scores at the output layer.

To recap, we have now a deep network (fig.~\ref{fig:ht_model}), which we refer to as HT model, that computes the score functions $h_y$ (eq.~\ref{eq:score}) with coefficient tensors $\A^y$ hierarchically decomposed as in eq.~\ref{eq:ht_decomp}.  
The network is universal in the sense that with enough channels $r_l$, any tensors may be represented.  
Moreover, the model is a super-set of the shallow CP model presented in sec.~\ref{sec:cp_model}.  
The question of depth efficiency now naturally arises.
In particular, we would like to know if there are functions that may be represented by a polynomially sized deep HT model, yet require exponential size from the shallow CP model.  
The answer, as described in sec.~\ref{sec:theorems}, is that almost all functions realizable by HT model meet this property.
In other words, the set of functions realizable by a polynomial CP model has \emph{measure zero} in the space of functions realizable by a given polynomial HT model.

\subsection{Shared Coefficients for Convolution} \label{sec:shared}

The $1\times1$ conv operator in our networks (see fig.~\ref{fig:cp_model} and~\ref{fig:ht_model}) implements a local linear transformation with coefficients generally being location-dependent.
In the special case where coefficients do not depend on location, i.e. remain fixed across space, the local linear transformation becomes a standard convolution.
We refer to this setting as coefficient \emph{sharing}.  
Sharing is a widely used structural constraint, one of the pillars behind the successful convolutional network architecture.
In the context of image processing (prominent application of convolutional networks), sharing is motivated by the observation that in natural images, the semantic content of a pattern often does not depend on its location.
In this subsection we explore the effect of sharing on the expressiveness of our networks, or more specifically, on the coefficient tensors $\A^y$ they can represent.

For CP model, coefficient sharing amounts to setting $\aaa^z:=\aaa^{z,1}=\cdots=\aaa^{z,N}$ in the CP decomposition (eq.~\ref{eq:cp_decomp}), transforming the latter to a symmetric CP decomposition:
$$ \A^y = \sum_{z=1}^Z a_z^y \cdot \underbrace{\aaa^z \otimes \cdots \otimes \aaa^z}_{N~\text{times}}
\quad ,\aaa^z \in \R^M,\aaa^y \in \R^Z $$
CP model with sharing is not universal (not all tensors $\A^y$ are representable, no matter how large $Z$ is allowed to be)~--~it can only represent symmetric tensors.  

In the case of HT model, sharing amounts to applying the following constraints on the hierarchical decomposition in eq.~\ref{eq:ht_decomp}: $\aaa^{l,\gamma}:=\aaa^{l,1,\gamma}=\cdots=\aaa^{l,\nicefrac{N}{2^l},\gamma}$ for every $l=0{\ldots}L-1$ and $\gamma=1{\ldots}r_l$.  
Note that in this case universality is lost as well, but nonetheless generated tensors are not limited to be symmetric, already demonstrating an expressive advantage of deep models over shallow ones. 
In sec.~\ref{sec:theorems} we take this further by showing that the shared HT model is exponentially more expressive than CP model, even if the latter is not constrained by sharing.

\section{Theorems of Network Capacity} \label{sec:theorems}

The first contribution of this paper, presented in sec.~\ref{sec:cac}, is the equivalence between deep learning architectures successfully employed in practice, and tensor decompositions.
Namely, we showed that convolutional arithmetic circuits as in fig.~\ref{fig:ht_model}, which are in fact SimNets that have demonstrated promising empirical performance (see app.~\ref{app:simnets}), may be formulated as hierarchical tensor decompositions.
As a second contribution, we make use of the established link between arithmetic circuits and tensor decompositions, combining theoretical tools from these two worlds, to prove results that are of interest to both deep learning and tensor analysis communities.
This is the focus of the current section.

The fundamental theoretical result proven in this paper is the following:
\begin{theorem} \label{thm:fundamental}
Let $\A^y$ be a tensor of order $N$ and dimension $M$ in each mode, generated by the recursive formulas in eq.~\ref{eq:ht_decomp}.  
Define $r:=\min\{r_0,M\}$, and consider the space of all possible configurations for the parameters of the composition~--~$\{\aaa^{l,j,\gamma}\}_{l,j,\gamma}$.  
In this space, the generated tensor $\A^y$ will have CP-rank of at least $r^{\nicefrac{N}{2}}$ almost everywhere (w.r.t. Lebesgue measure).  
Put differently, the configurations for which the CP-rank of $\A^y$ is less than $r^{\nicefrac{N}{2}}$ form a set of measure zero.  
The exact same result holds if we constrain the composition to be ``shared'', i.e. set $\aaa^{l,j,\gamma}\equiv\aaa^{l,\gamma}$ and consider the space of $\{\aaa^{l,\gamma}\}_{l,\gamma}$ configurations.
\end{theorem}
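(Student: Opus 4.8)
The plan is to lower-bound the CP-rank of $\A^y$ by the rank of a single, carefully chosen matricization, and then argue that this matricization has maximal rank for almost all parameter settings. First I would recall the elementary fact that for any tensor $\A$ and any partition of its $N$ modes into two sets $I,J$, the CP-rank of $\A$ is at least the rank of the matricization $[\A]_{I,J}$ (the matrix whose rows are indexed by the modes in $I$ and columns by those in $J$): writing $\A$ as a sum of $Z$ rank-$1$ tensors turns $[\A]_{I,J}$ into a sum of $Z$ rank-$1$ matrices, so $\mathrm{rank}[\A]_{I,J}\le Z$. It therefore suffices to find one matricization of $\A^y$ whose rank is at least $r^{N/2}$ almost everywhere. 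Guided by the binary tree underlying eq.~\ref{eq:ht_decomp}, I would take $I$ to be the odd modes and $J$ the even modes, so that the pairing $(1,2),(3,4),\ldots$ produced at the first level is split across rows and columns, and this split propagates consistently up the tree.

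The engine is a second elementary identity: matricizing a tensor product $\A\otimes\B$ along a consistently split set of modes yields the Kronecker product of the matricizations of $\A$ and $\B$, up to a permutation of rows and columns that leaves the rank unchanged, and the rank of a Kronecker product is the product of the ranks. Applied to eq.~\ref{eq:ht_decomp}, the matricization of $\phi^{l,j,\gamma}$ is a sum over $\alpha$ of Kronecker products of the matricizations of its two children. Letting $R_l$ denote the rank attainable by such a matricization, I would establish $R_l\ge R_{l-1}^2$ with base case $R_1=r=\min\{r_0,M\}$; since $N=2^L$, iterating gives $R_L\ge r^{2^{L-1}}=r^{N/2}$ for $[\A^y]$. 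The inequality $R_l\ge R_{l-1}^2$ is witnessed at one explicit configuration: at the first level choose the $\aaa^{0,j,\gamma}$ and weights so each $[\phi^{1,j,\gamma}]=\sum_\alpha a^{1,j,\gamma}_\alpha\,\aaa^{0,2j-1,\alpha}(\aaa^{0,2j,\alpha})^\top$ has rank exactly $r$ (possible since $r\le M$), and at every higher level zero out all weights except $\alpha=1$, so the matricization collapses to a single Kronecker product whose rank is the product of the children's ranks. Because $r^{N/2}\le M^{N/2}$ equals the matrix dimension, no rank saturates prematurely.

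The passage from this single witness to ``almost everywhere'' is the clean conceptual step. Every entry of $[\A^y]$ is a polynomial in the parameters $\{\aaa^{l,j,\gamma}\}$, hence so is every $r^{N/2}\times r^{N/2}$ minor, and the set of configurations with $\mathrm{rank}[\A^y]<r^{N/2}$ is exactly the common zero set of all these minors. The witness shows that at least one such minor is not the identically-zero polynomial, so the bad set is contained in the zero set of a nontrivial polynomial on Euclidean space, which has Lebesgue measure zero. For the shared case I would run the identical argument inside the smaller parameter space $\{\aaa^{l,\gamma}\}$; the only point needing care is the witness, since under sharing each $[\phi^{1,j,\gamma}]$ becomes a \emph{symmetric} matrix $\sum_\alpha a_\alpha^{1,\gamma}\,\aaa^{0,\alpha}(\aaa^{0,\alpha})^\top$. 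This can still be made rank $r$ (take the $\aaa^{0,\alpha}$ to be $r$ distinct standard basis vectors), and zeroing the weights $a_\alpha^{l,\gamma}$ for $\alpha\ne 1$ remains available, so the same minor-variety argument applies verbatim in the shared parameters.

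The main obstacle I expect is the index bookkeeping in the matricization--Kronecker identity: verifying that the odd/even mode split of $\A^y$ restricts, level by level, to consistent splits of every subtensor $\phi^{l,j,\gamma}$ so that the Kronecker structure genuinely propagates, and tracking the row/column permutations that relate the matricization to a clean Kronecker product. Once that identity and the witness construction are pinned down, the multiplicative recursion and the measure-zero conclusion follow routinely.
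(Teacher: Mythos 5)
Your proposal is correct, and its core rank machinery is the same as the paper's: lower-bound CP-rank by the rank of the odd/even matricization (a CP decomposition with $Z$ terms matricizes to a sum of $Z$ rank-one matrices), then exploit $[\A\otimes\B]=[\A]\odot[\B]$ together with $rank(A{\odot}B)=rank(A){\cdot}rank(B)$ so that ranks square as you climb the binary tree. Where you genuinely depart from the paper is the ``almost everywhere'' step. The paper proves it by induction over the levels $l=1,\ldots,L$, maintaining the measure-theoretic statement that \emph{all} intermediate matricizations $[\phi^{l,j,\gamma}]$ have rank at least $r^{2^{l-1}}$ almost everywhere; the base case is its Lemma~\ref{lemma:base} (a determinant-polynomial argument), and the inductive propagation rests on Lemma~\ref{lemma:step}, a separate lemma about linear combinations $\sum_\alpha x_\alpha A_\alpha(\y)$ of continuous matrix families, proved via Fubini's theorem and monotone convergence. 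You instead collapse everything into a single global argument: the bad set is the common zero locus of all $r^{N/2}\times r^{N/2}$ minors of $[\A^y]$, each minor is a polynomial in the parameters (the parametrization is multilinear level by level), and one explicit witness~--~basis-vector assignments at level one, all higher-level weight vectors set to $\e_1$ so each matricization degenerates to a single Kronecker product~--~certifies that some minor is not the zero polynomial, whence the bad set lies in a measure-zero algebraic hypersurface. Your witness is essentially the assignment the paper itself uses inside the proof of Theorem~\ref{thm:generalized} at level $L_2$, but you use it once, at the top, rather than per level. What each approach buys: yours is shorter and more elementary, needing only the fact that a nontrivial polynomial vanishes on a null set (and it is important, and true here, that the parametrization is polynomial~--~this is exactly what licenses skipping the induction); the paper's Lemma~\ref{lemma:step} applies to merely \emph{continuous} matrix families and its induction yields the stronger by-product that all intermediate tensors simultaneously have maximal matricization rank almost everywhere, which is what gets reused in the generalized theorem. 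Your handling of the shared case (run the same minor argument in the smaller parameter space, with the symmetric rank-$r$ witness $\sum_{\alpha\le r}\e_\alpha\e_\alpha^\top$) is also sound, and mirrors the paper's observation that its witness has no dependence on $j$.
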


From the perspective of deep learning, thm.~\ref{thm:fundamental} leads to the following corollary:
\begin{corollary} \label{corollary:fundamental}
Given linearly independent representation functions $\{f_{\theta_d}\}_{d\in[M]}$, randomizing the weights of HT model (sec.~\ref{sec:ht_model}) by a continuous distribution induces score functions $h_y$ that with probability one, cannot be approximated arbitrarily well (in $L^2$ sense) by a CP model (sec.~\ref{sec:cp_model}) with less than $\min\{r_0,M\}^{\nicefrac{N}{2}}$ hidden channels.  
This result holds even if we constrain HT model with weight sharing (sec.~\ref{sec:shared}) while leaving CP model in its general form.
\end{corollary}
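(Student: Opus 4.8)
The plan is to prove the corollary in two stages: first translate the statement about $L^2$ approximation of score functions into a statement about approximation of coefficient tensors, and then establish the required tensor-rank lower bound almost everywhere. For the translation I would use that the representation functions $\{f_{\theta_d}\}_{d\in[M]}$ are linearly independent, which (as noted in the excerpt) makes the $M^N$ product functions $\prod_{i=1}^N f_{\theta_{d_i}}(\x_i)$ linearly independent. These products span a fixed finite-dimensional subspace $V\subseteq L^2$, so the linear map sending a coefficient tensor to its score function (eq.~\ref{eq:score}) is an isomorphism onto $V$; since all norms on a finite-dimensional space are equivalent, this map and its inverse are continuous. Consequently $L^2$-approximation of $h_y$ by score functions is equivalent to approximation of $\A^y$ by the corresponding coefficient tensors in any tensor norm. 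A CP model with $Z$ hidden channels realizes, by eq.~\ref{eq:cp_decomp}, a coefficient tensor of CP-rank at most $Z$; hence the corollary reduces to showing that, almost everywhere over the HT weights, $\A^y$ cannot be approximated arbitrarily well by tensors of CP-rank strictly below $r^{N/2}$, with $r=\min\{r_0,M\}$.

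The obstruction to such approximation is \emph{border rank}, and the key observation is that both CP-rank and border rank are bounded below by the rank of any matricization. Fix a balanced partition $(I,J)$ of the $N$ modes and let $[\A^y]_{I,J}$ denote the corresponding matricization. For any CP decomposition with $Z$ terms the matricization is a sum of $Z$ rank-one matrices, so $\text{rank}([\A^y]_{I,J})\le\text{CP-rank}(\A^y)$. Moreover matrix rank is lower semicontinuous (the set of matrices of rank $\le k$ is closed, being cut out by the vanishing of all $(k{+}1)$-minors), so if $\A^y$ were a limit of tensors of CP-rank $\le Z$ then $\text{rank}([\A^y]_{I,J})\le Z$ as well. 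Thus it suffices to prove that $\text{rank}([\A^y]_{I,J})\ge r^{N/2}$ almost everywhere; this single estimate yields both the CP-rank bound of thm.~\ref{thm:fundamental} and the corollary's non-approximability claim.

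To obtain the a.e.\ matricization bound I would choose the partition $(I,J)$ recursively along the binary tree of eq.~\ref{eq:ht_decomp}, so that at every node the two children each contribute one mode-block to $I$ and one to $J$; with this choice the matricization of a tensor product of two sibling tensors equals, up to a fixed row/column permutation, the Kronecker product of their matricizations, and ranks multiply. The entries of $[\A^y]_{I,J}$ are polynomials in the HT parameters, so by the standard fact that the zero set of a not-identically-zero polynomial has Lebesgue measure zero, it is enough to exhibit a \emph{single} parameter configuration at which the rank attains $r^{N/2}$: then some $r^{N/2}\times r^{N/2}$ minor is a nonzero polynomial, and outside its measure-zero zero set the rank is at least $r^{N/2}$. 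For the witness I would take $\aaa^{0,j,\gamma}=\e_\gamma$ for $\gamma\le r$, the level-$1$ weights equal to $1$ on their first $r$ coordinates, and all higher-level weights equal to the first standard-basis vector; then each level-$1$ block is a rank-$r$ matrix and every tensor product collapses to a single Kronecker product, so $[\A^y]_{I,J}$ is a Kronecker product of $N/2$ rank-$r$ factors and has rank exactly $r^{N/2}$. Since this configuration is location-independent it is admissible under weight sharing, proving the shared case simultaneously.

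The routine parts are the functional-analytic translation and the polynomial/measure argument. The main obstacle is the matricization step: one must pin down the recursive partition, verify carefully that tensor products become Kronecker products under it (so that rank is multiplicative), and check that the explicit witness really produces a full Kronecker product of rank-$r$ blocks rather than an uncontrolled sum over the contraction indices $\alpha$ in eq.~\ref{eq:ht_decomp}. Getting this correspondence exactly right, including the bookkeeping of which modes go to rows versus columns at each level, is where the real work lies; once it is in place the lower bound $r^{N/2}$, and hence the corollary, follow immediately.
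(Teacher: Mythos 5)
Your proposal is correct, and its outer layers are essentially the paper's own argument: the reduction from $L^2$ approximation of score functions to approximation of coefficient tensors is lemma~\ref{lemma:approx} (the paper phrases your norm-equivalence step as convergence of representation coefficients in a finite-dimensional Hilbert space), and the combination of matricization-rank lower bounds with lower semicontinuity of matrix rank is exactly how the paper handles the border-rank issue. Where you genuinely diverge is in proving the almost-everywhere bound $rank[\A^y]\geq r^{\nicefrac{N}{2}}$, i.e.~thm.~\ref{thm:fundamental} itself. The paper proceeds by induction over the levels of eq.~\ref{eq:ht_decomp}: lemma~\ref{lemma:base} shows the level-$1$ matricizations $ADB^\top$ have rank $r$ almost everywhere, and lemma~\ref{lemma:step} (a Fubini argument) propagates maximal rank through each level via multiplicativity of rank under the Kronecker product. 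You instead make a single global polynomial argument: exhibit one witness configuration (first-level vectors $\e_\gamma$, level-$1$ weight vectors with ones in their first $r$ coordinates and zeros elsewhere, all higher weights $\e_1$) under which the decomposition collapses to $T^{\otimes \nicefrac{N}{2}}$ with $T=\sum_{\alpha=1}^{r}\e_\alpha\otimes\e_\alpha$, so that $[\A^y]$ is a Kronecker product of $\nicefrac{N}{2}$ rank-$r$ partial identities; a suitable $r^{\nicefrac{N}{2}}\times r^{\nicefrac{N}{2}}$ minor is then a not-identically-zero polynomial in the HT parameters, and the rank bound holds off its measure-zero zero set. This witness is valid, and its $j$-independence indeed settles the shared case simultaneously. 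Your route is more elementary~--~it bypasses lemma~\ref{lemma:step} entirely~--~and it is in fact the technique the paper itself deploys locally, at level $l=L_2$, in the proof of thm.~\ref{thm:generalized}. What the paper's induction buys in exchange is modular, stronger information (every intermediate tensor $\phi^{l,j,\gamma}$ has maximal matricization rank almost everywhere, not just $\A^y$), which it reuses when proving the generalized theorem. Finally, the ``main obstacle'' you flag~--~pinning down a partition under which tensor products become Kronecker products~--~is resolved in the paper simply by the fixed odd-modes-to-rows, even-modes-to-columns matricization, which already respects the balanced binary tree of the decomposition.
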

That is to say, \emph{besides a negligible set, all functions that can be realized by a polynomially sized HT model (with or without weight sharing), require exponential size in order to be realized, or even approximated, by CP model}.
Most of the previous works relating to depth efficiency (see app.~\ref{app:related_work}) merely show \emph{existence} of functions that separate depths (i.e.~that are efficiently realizable by a deep network yet require super-polynomial size from shallow networks).
Corollary~\ref{corollary:fundamental} on the other hand establishes depth efficiency for \emph{almost all} functions that a deep network can implement.
Equally importantly, it applies to deep learning architectures that are being successfully employed in practice (SimNets~--~see app.~\ref{app:simnets}).

Adopting the viewpoint of tensor analysis, thm.~\ref{thm:fundamental} states that besides a negligible set, all tensors realized by HT (Hierarchical Tucker) decomposition cannot be represented by the classic CP (rank-1) decomposition if the latter has less than an exponential number of terms
\footnote{
As stated in sec.~\ref{sec:ht_model}, the decomposition in eq.~\ref{eq:ht_decomp} to which thm.~\ref{thm:fundamental} applies is actually a special case of HT decomposition as introduced in~\cite{Hackbusch:2009jj}.
However, the theorem and its proof can easily be adapted to account for the general case.
We focus on the special case merely because it corresponds to convolutional arithmetic circuit architectures used in practice.
}.
To the best of our knowledge, this result has never been proved in the tensor analysis community.
In the original paper introducing HT decomposition~(\cite{Hackbusch:2009jj}), as a motivating example, the authors present a specific tensor that is efficiently realizable by HT decomposition while requiring an exponential number of terms from CP decomposition
\footnote{
The same motivating example is given in a more recent textbook introducing tensor analysis~(\cite{Hackbusch-book}).
}.
Our result strengthens this motivation considerably, showing that it is not just one specific tensor that favors HT over CP, but rather, almost all tensors realizable by HT exhibit this preference.
Taking into account that any tensor realized by CP can also be realized by HT with only a polynomial penalty in the number of parameters (see sec.~\ref{sec:ht_model}), this implies that in an asymptotic sense, HT decomposition is exponentially more efficient than CP decomposition.

\subsection{Proof Sketches} \label{sec:theorems:proof_sketch}

The complete proofs of thm.~\ref{thm:fundamental} and corollary~\ref{corollary:fundamental} are given in app.~\ref{app:proofs}.
We provide here an outline of the main tools employed and arguments made along these proofs.

\medskip

To prove thm.~\ref{thm:fundamental} we combine approaches from the worlds of circuit complexity and tensor decompositions.
The first class of machinery we employ is \emph{matrix algebra}, which has proven to be a powerful source of tools for analyzing the complexity of circuits.
For example, arithmetic circuits have been analyzed through what is called the partial derivative matrix (see~\cite{raz2009lower}), and for boolean circuits a widely used tool is the communication matrix (see~\cite{karchmer1989communication}).
We gain access to matrix algebra by arranging tensors that take part in the CP and HT decompositions as matrices, a process often referred to as \emph{matricization}.
With matricization, the tensor product translates to the Kronecker product, and the properties of the latter become readily available.
The second tool-set we make use of is \emph{measure theory}, which prevails in the study of tensor decompositions, but is much less frequent in analyses of circuit complexity. 
In order to frame a problem in measure theoretical terms, one obviously needs to define a measure space of interest.
For tensor decompositions, the straightforward space to focus on is that of the decomposition variables.
For general circuits on the other hand, it is often unclear if defining a measure space is at all appropriate.
However, when circuits are considered in the context of machine learning they are usually parameterized, and defining a measure space on top of these parameters is an effective approach for studying the prevalence of various properties in hypotheses spaces.

Our proof of thm.~\ref{thm:fundamental} traverses through the following path.
We begin by showing that matricizing a rank-1 tensor produces a rank-1 matrix.
This implies that the matricization of a tensor generated by a CP decomposition with $Z$ terms has rank at most $Z$.
We then turn to show that the matricization of a tensor generated by the HT decomposition in eq.~\ref{eq:ht_decomp} has rank at least $\min\{r_0,M\}^{N/2}$ almost everywhere.
This is done through induction over the levels of the decomposition ($l=1{\ldots}L$).
For the first level ($l=1$), we use a combination of measure theoretical and linear algebraic arguments to show that the generated matrices have maximal rank ($\min\{r_0,M\}$) almost everywhere.
For the induction step, the facts that under matricization tensor product translates into Kronecker product, and that the latter increases ranks multiplicatively
\footnote{
If $\odot$ denotes the Kronecker product, then for any matrices $A$ and $B$: $rank(A{\odot}B)=rank(A){\cdot}rank(B)$.
},
imply that matricization ranks in the current level are generally equal to those in the previous level squared.
Measure theoretical claims are then made to ensure that this indeed takes place almost everywhere.

\medskip

To prove corollary~\ref{corollary:fundamental} based on thm.~\ref{thm:fundamental}, we need to show that the inability of CP model to realize a tensor generated by HT model, implies that the former cannot approximate score functions produced by the latter.
In general, the set of tensors expressible by a CP decomposition is not topologically closed
\footnote{
Hence the definition of \emph{border rank}, see~\cite{Hackbusch-book}.
},
which implies that a-priori, it may be that CP model can approximate tensors generated by HT model even though it cannot realize them.
However, since the proof of thm.~\ref{thm:fundamental} was achieved through separation of matrix rank, distances are indeed positive and CP model cannot approximate HT model's tensors almost always.
To translate from tensors to score functions, we simply note that in a finite-dimensional Hilbert space convergence in norm implies convergence in coefficients under any basis.
Therefore, in the space of score functions (eq.~\ref{eq:score}) convergence in norm implies convergence in coefficients under the basis $\{(\x_1,\ldots,\x_N){\mapsto}\prod_{i=1}^{N}f_{\theta_{d_i}}(\x_i) \}_{d_1{\ldots}d_N\in[M]}$.
That is to say, it implies convergence in coefficient tensors.

\subsection{Generalization} \label{sec:theorems:general}

Thm.~\ref{thm:fundamental} and corollary~\ref{corollary:fundamental} compare the expressive power of the deep HT model (sec.~\ref{sec:ht_model}) to that of the shallow CP model (sec.~\ref{sec:cp_model}).  
One may argue that such an analysis is lacking, as it does not convey information regarding the importance of each individual layer.  
In particular, it does not shed light on the advantage of very deep networks, which at present provide state of the art recognition accuracy, compared to networks of more moderate depth.  
For this purpose we present a generalization, specifying the amount of resources one has to pay in order to maintain representational power while layers are incrementally cut off from a deep network.  
For conciseness we defer this analysis to app.~\ref{app:general_thm}, and merely state here our final conclusions.
We find that the representational penalty is double exponential w.r.t. the number of layers removed.
In addition, there are certain cases where the removal of even a single layer leads to an exponential inflation, falling in line with the suggestion of~\cite{Bengio:2009kb}.

\section{Discussion} \label{sec:discussion}

In this work we address a fundamental issue in deep learning~--~the expressive efficiency of depth.
There have been many attempts to theoretically analyze this question, but from a practical machine learning perspective, existing results are limited.
Most of the results apply to very specific types of networks that do not resemble ones used in practice, and none of the results account for the locality-sharing-pooling paradigm which forms the basis for convolutional networks~--~the most successful deep learning architecture to date.
In addition, current analyses merely show \emph{existence} of depth efficiency, i.e. of functions that are efficiently realizable by deep networks but not by shallow ones.
The practical implications of such findings are arguably slight, as a-priori, it may be that only a small fraction of the functions realizable by deep networks enjoy depth efficiency, and for all the rest shallow networks suffice.

Our aim in this paper was to develop a theory that facilitates an analysis of depth efficiency for networks that incorporate the widely used structural ingredients of locality, sharing and pooling. 
We consider the task of classification into one of a finite set of categories $\Y=\{1{\ldots}Y\}$.
Our instance space is defined to be the Cartesian product of $N$ vector spaces, in compliance with the common practice of representing natural data through ordered local structures (e.g. images through patches).
Each of the $N$ vectors that compose an instance is represented by a descriptor of length $M$, generated by running the vector through $M$ ``representation'' functions.
As customary, classification is achieved through maximization of score functions $h_y$, one for every category $y\in\Y$.
Each score function is a linear combination over the $M^N$ possible products that may be formed by taking one descriptor entry from every input vector.
The coefficients for these linear combinations conveniently reside in tensors $\A^y$ of order $N$ and dimension $M$ along each axis.
We construct networks that compute score functions $h_y$ by decomposing (factorizing) the coefficient tensors $\A^y$.  
The resulting networks are convolutional arithmetic circuits that incorporate locality, sharing and pooling, and operate on the $N{\cdot}M$ descriptor entries generated from the input.

We show that a shallow (single hidden layer) network realizes the classic CP (rank-1) tensor decomposition, whereas a deep network with $\log_{2}N$ hidden layers realizes the recently introduced Hierarchical Tucker (HT) decomposition~(\cite{Hackbusch:2009jj}).  
Our fundamental result, presented in thm.~\ref{thm:fundamental} and corollary~\ref{corollary:fundamental}, states that randomizing the weights of a deep network by some continuous distribution will lead, \emph{with probability one}, to score functions that cannot be approximated by a shallow network if the latter's size is not exponential (in $N$). 
We extend this result (thm.~\ref{thm:generalized} and corollary~\ref{corollary:generalized}) by deriving analogous claims that compare 
two networks of any depths, not just deep vs. shallow.

To further highlight the connection between our networks and ones used in practice, we show (app.~\ref{app:simnets}) that translating convolution and product pooling computations to log-space (for numerical stability) gives rise to SimNets~--~a recently proposed deep learning architecture which has been shown to produce state of the art accuracy in computationally limited settings~(\cite{simnets2}).

\bigskip

Besides the central line of our work discussed above, the construction and theory presented in this paper shed light on various conjectures and practices employed by the deep learning community.  
First, with respect to the \emph{pooling} operation, our analysis points to the possibility that perhaps it has more to do with factorization of computed functions than it does with translation invariance.
This may serve as an explanation for the fact that pooling windows in state of the art convolutional networks are typically very small (see for example~\cite{simonyan2014very}), often much smaller than the radius of translation one would like to be invariant to.
Indeed, in our framework, as we show in app.~\ref{app:general_thm}, pooling over large windows and trimming down a network's depth may bring to an exponential decrease in expressive efficiency. 

The second point our theory sheds light on is \emph{sharing}.
As discussed in sec.~\ref{sec:shared}, introducing weight sharing to a shallow network (CP model) considerably limits its expressive power.
The network can only represent symmetric tensors, which in turn means that it is location invariant w.r.t. input vectors (patches).
In the case of a deep network (HT model) the limitation posed by sharing is not as strict.  
Generated tensors need not be symmetric, implying that the network is capable of modeling location~--~a crucial ability in almost any real-world task.  
The above findings suggest that the sharing constraint is increasingly limiting as a network gets shallower, to the point where it causes complete ignorance to location.  
This could serve as an argument supporting the empirical success of deep convolutional networks~--~they bind together the statistical and computational advantages of sharing with many layers that mitigate its expressive limitations.

Lastly, our construction advocates \emph{locality}, or more specifically, $1\times1$ receptive fields.  
Recent convolutional networks providing state of the art recognition performance (e.g.~\cite{NiN,Szegedy:2014tb}) make extensive use of $1\times1$ linear transformations, proving them to be very successful in practice.  
In view of our model, such $1\times1$ operators factorize tensors while providing universality with a minimal number of parameters.  
It seems reasonable to conjecture that for this task of factorizing coefficient tensors, larger receptive fields are not significantly helpful, as they lead to redundancy which may deteriorate performance in presence of limited training data.
Investigation of this conjecture is left for future work.

\newcommand{\acknowledgments}
{Amnon Shashua would like to thank Tomaso Poggio and Shai S. Shwartz for illuminating discussions during the preparation of this manuscript.  
We would also like to thank Tomer Galanti, Tamir Hazan and Lior Wolf for commenting on draft versions of the paper.  
The work is partly funded by Intel grant ICRI-CI no. 9-2012-6133 and by ISF Center grant 1790/12.  
Nadav Cohen is supported by a Google Fellowship in Machine Learning.}
\ifdefined\COLT
	\acks{\acknowledgments}
\else
	\ifdefined\CAMREADY
		\subsubsection*{Acknowledgments}
		\acknowledgments
	\fi
\fi

\subsection*{References}
\small{
\bibliographystyle{plainnat}
\bibliography{refs.bib}
}

\clearpage
\appendix

\section{Generalized Theorem of Network Capacity} \label{app:general_thm}

In sec.~\ref{sec:theorems} we presented our fundamental theorem of network capacity (thm.~\ref{thm:fundamental} and corollary~\ref{corollary:fundamental}), showing that besides a negligible set, all functions that can be realized by a polynomially sized HT model (with or without weight sharing), require exponential size in order to be realized, or even approximated, by CP model.
In terms of network depth, CP and HT models represent the extremes~--~the former has only a single hidden layer achieved through global pooling, whereas the latter has $L=\log_{2}N$ hidden layers achieved through minimal (size-$2$) pooling windows.
It is of interest to generalize the fundamental result by establishing a comparison between networks of intermediate depths.
This is the focus of the current appendix.

\medskip

We begin by defining a truncated version of the hierarchical tensor decomposition presented in eq.~\ref{eq:ht_decomp}:
\bea
\phi^{1,j,\gamma} &=& \sum_{\alpha=1}^{r_0} a_\alpha^{1,j,\gamma} \aaa^{0,2j-1,\alpha} \otimes  \aaa^{0,2j,\alpha} 
\nonumber \\
&\vdots& 
\nonumber\\
\phi^{l,j,\gamma} &=& \sum_{\alpha=1}^{r_{l-1}} a_\alpha^{l,j,\gamma} 
\underbrace{\phi^{l-1,2j-1,\alpha}}_{\text{order $2^{l-1}$}} \otimes  \underbrace{\phi^{l-1,2j,\alpha}}_{\text{order $2^{l-1}$}} 
\nonumber\\
&\vdots& 
\nonumber\\
\A  &=& \sum_{\alpha=1}^{r_{L_c - 1}} a^{L_c}_\alpha 
\mathop{\otimes}_{j=1}^{2^{L-L_c+1}} \underbrace{\phi^{L_c-1,j,\alpha}}_{\text{order $2^{L_c - 1}$}} 
\label{eq:ht_decomp_trunc}
\eea
The only difference between this decomposition and the original is that instead of completing the full process with $L:=\log_{2}N$ levels, we stop after $L_c{\leq}L$.  
At this point remaining tensors are binded together to form the final order-$N$ tensor.  
The corresponding network will simply include a premature global pooling stage that shrinks feature maps to $1\times1$, and then a final linear layer that performs classification.  
As before, we consider a shared version of the decomposition in which $\aaa^{l,j,\gamma}\equiv\aaa^{l,\gamma}$.
Notice that this construction realizes a continuum between CP and HT models, which correspond to the extreme cases $L_c=1$ and $L_c=L$ respectively.

The following theorem, a generalization of thm.~\ref{thm:fundamental}, compares a truncated decomposition having $L_1$ levels, to one
with $L_2<L_1$ levels that implements the same tensor, quantifying the penalty in terms of parameters:
\begin{theorem} \label{thm:generalized}
Let $\A^{(1)}$ and $\A^{(2)}$ be tensors of order $N$ and dimension $M$ in each mode, generated by the truncated recursive formulas in eq.~\ref{eq:ht_decomp_trunc}, with $L_1$ and $L_2$ levels respectively.
Denote by $\{r^{(1)}_l\}_{l=0}^{L_1-1}$ and $\{r^{(2)}_l\}_{l=0}^{L_2-1}$ the composition ranks of $\A^{(1)}$ and $\A^{(2)}$ respectively.  
Assuming w.l.o.g. that $L_1 > L_2$, we define $r := \min\{r^{(1)}_0,...,r^{(1)}_{L_2-1},M\}$, and consider the space of all possible configurations for the parameters of $\A^{(1)}$'s composition~--~$\{\aaa^{(1),l,j,\gamma}\}_{l,j,\gamma}$.  
In this space, almost everywhere (w.r.t. Lebesgue measure), the generated tensor $\A^{(1)}$ requires that $r^{(2)}_{L_2-1}\geq(r)^{2^{L - L_2}}$ if one wishes that $\A^{(2)}$ be equal to $\A^{(1)}$.  
Put differently, the configurations for which $\A^{(1)}$ can be realized by $\A^{(2)}$ with $r^{(2)}_{L_2 - 1}<(r)^{2^{L - L_2}}$ form a set of measure zero.  
The exact same result holds if we constrain the composition of $\A^{(1)}$ to be ``shared'', i.e. set $\aaa^{(1),l,j,\gamma}\equiv\aaa^{(1),l,\gamma}$ and consider the space of $\{\aaa^{(1),l,\gamma}\}_{l,\gamma}$ configurations.
\end{theorem}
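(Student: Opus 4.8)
The plan is to reduce the claim to a comparison of matrix ranks via matricization, exactly as in the proof of thm.~\ref{thm:fundamental}, but with a partition of the $N$ modes tailored to the truncation level $L_2$. Throughout, let $[\cdot]$ denote matricization with respect to a fixed balanced partition $(I,J)$ of $[N]$, with $|I|=|J|=N/2$, chosen so that (i) each block of $2^{L_2-1}$ consecutive modes — i.e. the modes underlying one level-$(L_2-1)$ sub-tensor — lies entirely in $I$ or entirely in $J$, and (ii) the $2^{L-L_2+1}$ such blocks are split, $2^{L-L_2}$ to each side, in the interleaved pattern induced by the binary tree sitting above level $L_2-1$, so that sibling blocks, and more generally sibling sub-trees, are separated. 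I will use that under matricization the tensor product becomes the Kronecker product $\odot$, and that $rank(A\odot B)=rank(A)\cdot rank(B)$. Since $\A^{(2)}=\A^{(1)}$ forces $[\A^{(1)}]=[\A^{(2)}]$, it suffices to upper bound $rank([\A^{(2)}])$ by $r^{(2)}_{L_2-1}$ and to lower bound $rank([\A^{(1)}])$ by $r^{2^{L-L_2}}$ almost everywhere.

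For the upper bound I observe that condition (i) places every factor $\phi^{(2),L_2-1,j,\alpha}$ of the final product in eq.~\ref{eq:ht_decomp_trunc} entirely on one side of the partition. Hence each summand $\mathop{\otimes}_{j}\phi^{(2),L_2-1,j,\alpha}$ matricizes to an outer product of the vectorized $I$-side factors with the vectorized $J$-side factors, i.e. a rank-$1$ matrix, and summing over $\alpha$ gives $rank([\A^{(2)}])\leq r^{(2)}_{L_2-1}$.

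For the lower bound I run the induction of thm.~\ref{thm:fundamental} on the levels $l=L_2,\ldots,L_1$ of $\A^{(1)}$, treating the level-$(L_2-1)$ sub-tensors as the base objects. The base case is the first combination above the blocks: the matricization of $\phi^{(1),L_2,k,\gamma}=\sum_{\alpha=1}^{r^{(1)}_{L_2-1}}a_\alpha^{L_2,k,\gamma}\,\phi^{(1),L_2-1,2k-1,\alpha}\otimes\phi^{(1),L_2-1,2k,\alpha}$ is a sum of $r^{(1)}_{L_2-1}$ outer products of vectorized sub-tensors, and I claim its generic rank is at least $r=\min\{r^{(1)}_0,\ldots,r^{(1)}_{L_2-1},M\}$. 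This is where the particular form of $r$ originates: routing $r\leq M$ standard basis vectors through the lower levels, each of width $r^{(1)}_l\geq r$, lets me make $\phi^{(1),L_2-1,2k-1,\alpha}$, $\alpha=1,\ldots,r$, linearly independent, and likewise for the even factors, producing a rank-$r$ witness. The inductive step is identical to thm.~\ref{thm:fundamental}: condition (ii) guarantees that each pairwise combination $\phi^{(1),l-1,\cdot,\alpha}\otimes\phi^{(1),l-1,\cdot,\alpha}$ matricizes to a Kronecker product, so isolating a single $\alpha$ multiplies the two lower-level ranks and the generic rank squares at each level, giving $rank([\phi^{(1),l,j,\gamma}])\geq r^{2^{l-L_2}}$. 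The top global-pooling step at level $L_1$ is absorbed the same way: each $\phi^{(1),L_1-1,j,\alpha}$ is split by the partition into $2^{L_1-L_2-1}$ blocks per side, hence matricizes to rank $\geq r^{2^{L_1-L_2-1}}$, and the Kronecker product over the $2^{L-L_1+1}$ factors within one summand multiplies these to $r^{2^{L-L_2}}$. In every step I turn "some assignment attains rank $\rho$" into "rank $\geq\rho$ almost everywhere" by the standard argument: the $\rho\times\rho$ minors are polynomials in the parameters, and a polynomial nonzero at the witness vanishes only on a set of Lebesgue measure zero.

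Combining, on the complement of a measure-zero set $rank([\A^{(1)}])\geq r^{2^{L-L_2}}$, so $\A^{(2)}=\A^{(1)}$ entails $r^{(2)}_{L_2-1}\geq rank([\A^{(2)}])=rank([\A^{(1)}])\geq r^{2^{L-L_2}}$, which is the contrapositive of the stated measure-zero conclusion; note that $L_2=1$ recovers thm.~\ref{thm:fundamental} with exponent $2^{L-1}=N/2$. The shared case needs no new idea: since the witness can be realized within the shared subspace $\{\aaa^{(1),l,\gamma}\}$, the same minors restricted to that subspace are not identically zero, so the measure-zero conclusion survives the restriction. The main obstacle, and the only place genuine work is required, is the base case of the lower bound — verifying that the generic rank of the first block-combination is controlled by $r=\min\{r^{(1)}_0,\ldots,r^{(1)}_{L_2-1},M\}$, which forces one to track how linear independence of the level-$(L_2-1)$ sub-tensors is limited simultaneously by the bottom dimension $M$ and by every intermediate rank $r^{(1)}_l$, and to exhibit an explicit parameter assignment that achieves it.
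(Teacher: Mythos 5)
Your proposal is correct and takes essentially the same approach as the paper: your block-respecting interleaved matricization is precisely the paper's mode-squeezing operator $\varphi_{2^{L_2-1}}$ composed with its standard odd-even matricization, and the rank-$\leq r^{(2)}_{L_2-1}$ upper bound for $\A^{(2)}$, the explicit basis-vector witness at level $L_2$ (the origin of $r=\min\{r^{(1)}_0,\ldots,r^{(1)}_{L_2-1},M\}$), the Kronecker-product rank induction with polynomial-minor measure-zero arguments, and the observation that the witness is symmetric in $j$ for the shared case all coincide with the paper's proof.
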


In analogy with corollary~\ref{corollary:fundamental}, we obtain the following generalization:
\begin{corollary} \label{corollary:generalized}
Suppose we are given linearly independent representation functions $f_{\theta_1}{\ldots}f_{\theta_M}$, and consider two networks that correspond to the truncated hierarchical tensor decomposition in eq.~\ref{eq:ht_decomp_trunc}, with $L_1$ and $L_2$ hidden layers respectively.  
Assume w.l.o.g. that $L_1>L_2$, i.e. that network~1 is deeper than network~2, and define $r$ to be the minimal number of channels across the representation layer and the first $L_2$ hidden layers of network~1.  
Then, if we randomize the weights of network~1 by a continuous distribution, we obtain, with probability one, score functions $h_y$ that cannot be approximated arbitrarily well (in $L^2$ sense) by network~2 if the latter has less than $(r)^{2^{L - L_2}}$ channels in its last hidden layer.  
The result holds even if we constrain network~1 with weight sharing while leaving network~2 in its general form.
\end{corollary}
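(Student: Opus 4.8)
The plan is to prove Corollary~\ref{corollary:generalized} in two stages: first establish the underlying tensor statement (Theorem~\ref{thm:generalized}) as a \emph{matrix-rank} separation, and then upgrade exact non-realizability to $L^2$ non-approximability. For the rank separation I would mimic the proof of Theorem~\ref{thm:fundamental}, but replace the finest odd-even matricization by one dictated by the \emph{coarser} tree of network~2. Writing $b:=2^{L_2-1}$ for the order of network~2's last-layer tensors $\phi^{(2),L_2-1,j,\alpha}$, partition the $N$ modes into the $2^{L-L_2+1}$ consecutive blocks of size $b$ and let $\M'$ be the matricization sending the odd-indexed blocks to rows and the even-indexed blocks to columns (for $L_2=1$ this is exactly the matricization of Theorem~\ref{thm:fundamental}). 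Treating each block as a single super-mode of dimension $M^b$, the last line of eq.~\ref{eq:ht_decomp_trunc} displays $\A^{(2)}$ as a CP decomposition of an order-$2^{L-L_2+1}$ super-tensor with $r^{(2)}_{L_2-1}$ rank-one terms; since a matricized rank-one tensor is a rank-one matrix and $\M'$ merely partitions the super-modes, this gives $\mathrm{rank}\,[\A^{(2)}]_{\M'}\le r^{(2)}_{L_2-1}$, so under $\A^{(2)}=\A^{(1)}$ it suffices to show $\mathrm{rank}\,[\A^{(1)}]_{\M'}\ge r^{2^{L-L_2}}$ almost everywhere.

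For the lower bound I would run an induction up network~1's tree tracking $\mathrm{rank}\,[\phi^{(1),l,j,\gamma}]_{\M'}$. Because $\M'$ never splits a block, for $l<L_2$ each $\phi^{(1),l,j,\gamma}$ sits inside one block and is merely vectorized; here the relevant quantity is the generic dimension of $\mathrm{span}_\gamma\{\phi^{(1),l,j,\gamma}\}$, which a short sub-induction bounds below by the cascading minimum $\min\{r^{(1)}_0,\ldots,r^{(1)}_l,M\}$ (each level can enlarge the reachable span but is throttled by its own rank, and by $M$ at the bottom), reaching $r$ at level $L_2-1$. Level $l=L_2$ produces the first genuine cut, $[\phi^{(1),L_2,j',\gamma}]_{\M'}=\sum_\alpha a^{L_2,j',\gamma}_\alpha\,\mathrm{vec}(\phi^{(1),L_2-1,2j'-1,\alpha})\,\mathrm{vec}(\phi^{(1),L_2-1,2j',\alpha})^{\top}$, whose rank is generically $\ge r$ because at least $r$ of the left and of the right factors are independent. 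For $l>L_2$ (and in the final flat product) the tensor products become Kronecker products under $\M'$, whose ranks multiply, so the factor $r$ is inherited once for each of the $2^{L-L_2}$ level-$L_2$ tensors, yielding $\mathrm{rank}\,[\A^{(1)}]_{\M'}\ge r^{2^{L-L_2}}$.

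Each inductive step only delivers \emph{generic} multiplicativity, so the step I expect to be the main obstacle is promoting ``the rank attains the claimed value for some weights'' to ``the rank is at least that value almost everywhere.'' The mechanism is the standard one used for Theorem~\ref{thm:fundamental}: every entry of $[\A^{(1)}]_{\M'}$ is a polynomial in network~1's parameters, the locus $\{\mathrm{rank}<r^{2^{L-L_2}}\}$ is the common zero set of the corresponding minors, and such a set is Lebesgue-null unless all those minors vanish identically; it therefore suffices to exhibit one weight assignment attaining the target rank at each stage, which I would build bottom-up (choosing deeper vectors as distinct coordinate tensors so that exactly $r$ block vectors are independent, then zeroing all but one summand at higher levels so a single full-rank Kronecker term survives). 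The weight-sharing variant only imposes equality constraints on the parameters, a positive-measure affine subspace on which the same non-identically-vanishing minors restrict to non-identically-vanishing polynomials, so the argument is unchanged.

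It remains to pass from tensors to score functions. The proof above separates the two families by \emph{matrix rank}, and the set of tensors whose $\M'$-matricization has rank below $r^{2^{L-L_2}}$ is a determinantal variety, hence \emph{closed}; every coefficient tensor realizable by network~2 with fewer than $r^{2^{L-L_2}}$ last-layer channels lies in this closed set, whereas the tensor generated by network~1 lies outside it almost everywhere, and is thus separated from it by a positive distance in coefficient space. Finally, since the representation functions $f_{\theta_1}\ldots f_{\theta_M}$ are linearly independent, the $M^N$ product functions form a basis of a finite-dimensional space, so $L^2$ convergence of score functions is equivalent to convergence of coefficient tensors; the positive coefficient-space gap therefore precludes $L^2$ approximation. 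Reading this over the continuous distribution on network~1's weights gives the ``with probability one'' conclusion, and the sharing case follows verbatim, completing Corollary~\ref{corollary:generalized}.
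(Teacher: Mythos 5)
Your proposal is correct, and its skeleton matches the paper's: your block matricization $\M'$ is exactly the paper's composite operator $[\varphi(\cdot)]$ (merge consecutive groups of $2^{L_2-1}$ modes into super-modes via the squeezing operator $\varphi$, then send odd super-modes to rows and even ones to columns); your bound $\mathrm{rank}\,[\A^{(2)}]_{\M'}\le r^{(2)}_{L_2-1}$ is the paper's observation that the top level of network~2's decomposition is a CP decomposition of the squeezed tensor; and your passage from tensors to score functions (closed determinantal variety, positive gap in coefficient space, then linear independence of the product functions making $L^2$ convergence equivalent to coefficient convergence) is precisely the paper's Lemma~\ref{lemma:approx}. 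Where you genuinely diverge is the mechanism for the almost-everywhere lower bound on network~1's rank. The paper proves $\mathrm{rank}\,[\varphi(\A^{(1)})]\ge r^{2^{L-L_2}}$ a.e.\ by induction over levels $l=L_2,\ldots,L_1$: the base case uses an explicit assignment plus the polynomial-vanishing argument, and each inductive step invokes the Fubini-based Lemma~\ref{lemma:step} about sums $\sum_i x_i A_i(\y)$. You instead collapse everything into a single global argument: the locus $\{\mathrm{rank}<r^{2^{L-L_2}}\}$ is contained in the zero set of one minor, which is a polynomial in all of network~1's weights and is not identically zero thanks to one bottom-up witness (coordinate vectors at the bottom, all-ones weights at level $L_2$, single-summand selection above, making $[\varphi(\A^{(1)})]$ a Kronecker product of $2^{L-L_2}$ rank-$r$ matrices). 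This is legitimate and arguably more elementary, avoiding Fubini altogether; what the paper's route buys is reusable machinery (Lemma~\ref{lemma:step} also drives the proof of Theorem~\ref{thm:fundamental}) and a statement that holds level-by-level for all intermediate tensors simultaneously, not only at the output.

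Two points deserve tightening. First, your intermediate claim that the level-$L_2$ matricization $ADB^\top$ has rank at least $r$ ``because at least $r$ of the left and of the right factors are independent'' is not a valid pointwise implication: $\mathrm{rank}(A)\ge r$ and $\mathrm{rank}(B)\ge r$ do not force $\mathrm{rank}(ADB^\top)\ge r$ (e.g.\ $A=[\e_1,\e_2,\0]$, $B=[\0,\e_1,\e_2]$, $D=I$ gives $ADB^\top=\e_2\e_1^\top$ of rank $1$). Your final witness-plus-minors mechanism never relies on this claim, so it is harmless, but as written it should not be presented as a step of the proof. Second, in the weight-sharing case, a non-identically-vanishing polynomial can perfectly well vanish identically on a proper affine subspace, so ``the same minors restrict to non-identically-vanishing polynomials'' is not automatic; the correct justification is that your witness assignment is independent of the location index $j$ and therefore lies inside the shared subspace~--~which it does, exactly as the paper notes when it observes that its level-$L_2$ assignment ``did not include any dependence on $j$.''
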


Proofs of thm.~\ref{thm:generalized} and corollary~\ref{corollary:generalized} are given in app.~\ref{app:proofs}.
Hereafter, we briefly discuss some of their implications.
First, notice that we indeed obtain a generalization of the fundamental theorem of network capacity (thm.~\ref{thm:fundamental} and corollary~\ref{corollary:fundamental}), which corresponds to the extreme case $L_1=L$ and $L_2=1$.  
Second, note that for the baseline case of $L_1=L$, i.e. a full-depth network has generated the target score function, approximating this with a truncated network draws a price that grows \emph{double exponentially} w.r.t. the number of missing layers.
Third, and most intriguingly, we see that when $L_1$ is considerably smaller than $L$, i.e. when a significantly truncated network is sufficient to model our problem, cutting off even a single layer leads to an exponential price, and this price is \emph{independent} of $L_1$.  
Such scenarios of exponential penalty for trimming down a single layer were discussed in~\cite{Bengio:2009kb}, but only in the context of specific functions realized by networks that do not resemble ones used in practice (see~\cite{Hastad91} for an example of such result).  
We prove this in a much broader, more practical setting, showing that for convolutional arithmetic circuit (SimNet~--~see app.~\ref{app:simnets}) architectures, almost any function realized by a significantly truncated network will exhibit this behavior.  
The issue relates to empirical practice, supporting the common methodology of designing networks that go as deep as possible.  
Specifically, it encourages extending network depth by pooling over small regions, avoiding significant spatial decimation that brings network termination closer.

\medskip

We conclude this appendix by stressing once more that our construction and theoretical approach are not limited to the models covered by our theorems (CP model, HT model, truncated HT model).  
These are merely exemplars deemed most appropriate for initial analysis.  
The fundamental and generalized theorems of network capacity are similar in spirit, and analogous theorems for networks with different pooling window sizes and depths (corresponding to different tensor decompositions) may easily be derived.

\section{Proofs} \label{app:proofs}

\subsection{Proof of Theorems~\ref{thm:fundamental} and~\ref{thm:generalized}} \label{app:proofs:thm}

Our proof of thm.~\ref{thm:fundamental} and~\ref{thm:generalized} relies on basic knowledge in measure theory, or more specifically, Lebesgue measure spaces.  
We do not provide here a comprehensive background on this field (the interested reader is referred to~\cite{jones2001lebesgue}), but rather supplement the brief discussion given in sec.~\ref{sec:preliminaries}, with a list of facts we will be using which are not necessarily intuitive:
\begin{itemize}
\item A union of countably (or finitely) many sets of zero measure is itself a set of zero measure.
\item If $p$ is a polynomial over $d$ variables that is not identically zero, the set of points in $\R^d$ in which it 
vanishes has zero measure (see~\cite{caron2005zero} for a short proof of this).
\item If $S \subset \R^{d_1}$ has zero measure, then $S \times \R^{d_2} \subset \R^{d_1+d_2}$, and
every set contained within, have zero measure as well.
\end{itemize}
In the above, and in the entirety of this paper, the only measure spaces we consider are Euclidean spaces equipped with Lebesgue measure.  
Thus when we say that a set of $d$-dimensional points has zero measure, we mean that its Lebesgue measure in the $d$-dimensional Euclidean space is zero.

Moving on to some preliminaries from matrix and tensor theory, we denote by $[\A]$ the \emph{matricization} of an order-$N$ tensor $\A$ (for simplicity, $N$ is assumed to be even), where rows correspond to odd modes and columns correspond to even modes.  
Namely, if $\A\in\R^{M_1{\times\cdots\times}M_N}$, the matrix $[\A]$ has $M_1{\cdot}M_3{\cdot\ldots\cdot}M_{N-1}$ rows and $M_2{\cdot}M_4{\cdot\ldots\cdot}M_N$ columns, rearranging the entries of the tensor such that $\A_{d_1{\ldots}d_N}$ is stored in row index $1+\sum_{i=1}^{\nicefrac{N}{2}}(d_{2i-1}-1)\prod_{j=i+1}^{\nicefrac{N}{2}}M_{2j-1}$ and column index $1+\sum_{i=1}^{\nicefrac{N}{2}}(d_{2i}-1)\prod_{j=i+1}^{\nicefrac{N}{2}}M_{2j}$.  
To distinguish from the tensor product operation $\otimes$, we denote the Kronecker product between matrices by $\odot$.
Specifically, for two matrices $A \in \R^{M_1\times M_2}$ and $B \in \R^{N_1\times N_2}$, $A \odot B$ is the matrix in $\R^{M_1 N_1 \times M_2 N_2}$ that holds $A_{ij} B_{kl}$ in row index $(i-1)N_1+k$ and column index $(j-1)N_2+l$.  
The basic relation that binds together tensor product, matricization and Kronecker product is $[\A\otimes\B]=[\A]\odot[\B]$, where $\A$ and $\B$ are tensors of even orders.  
Two additional facts we will make use of are that the matricization is a linear operator (i.e. for scalars $\alpha_1{\ldots}\alpha_r$ and tensors with the same size $\A_1{\ldots}\A_r$: $[\sum_{i=1}^r\alpha_i\A_i]=\sum_{i=1}^r\alpha_i[\A_i]$), and less trivially, that for any matrices $A$ and $B$, the rank of $A \odot B$ is equal to $rank(A) \cdot rank(B)$ (see~\cite{bellman1970introduction} for a proof).
These two facts, along with the basic relation laid out above, lead to the conclusion that: 
$$ rank\left[\vv^{(z)}_1\otimes\cdots\otimes\vv^{(z)}_{2^L}\right]=\prod_{i=1}^{\nicefrac{2^L}{2}}
rank\overbrace{\left[\vv^{(z)}_{2i-1}\otimes\vv^{(z)}_{2i}\right]}^{\vv^{(z)}_{2i-1}\vv^{(z)\top}_{2i}}=1 $$
and thus:
$$rank\left[\sum_{z=1}^Z\lambda_z\vv^{(z)}_1\otimes\cdots\otimes\vv^{(z)}_{2^L}\right] 
=rank\sum_{z=1}^Z\lambda_z\left[\vv^{(z)}_1\otimes\cdots\otimes\vv^{(z)}_{2^L}\right] 
\leq \sum_{z=1}^Z rank\left[\vv^{(z)}_1\otimes\cdots\otimes\vv^{(z)}_{2^L}\right]=Z$$
In words, an order-$2^L$ tensor given by a CP-decomposition (see sec.~\ref{sec:preliminaries}) with $Z$ terms, has matricization with rank at most $Z$.  
Thus, \emph{to prove that a certain order-$2^L$ tensor has CP-rank of at least $R$, it suffices to show that its matricization has rank of at least $R$}.

\medskip

We now state and prove two lemmas that will be needed for our proofs of thm.~\ref{thm:fundamental} and~\ref{thm:generalized}.

\begin{lemma} \label{lemma:base}
Let $M,N\in\N$, and define the following mapping taking $\x\in\R^{2MN+N}$ to three matrices: $A(\x)\in\R^{M \times N}$, $B(\x)\in\R^{M \times N}$ and $D(\x)\in\R^{N \times N}$.  
$A(\x)$ simply holds the first $MN$ elements of $\x$, $B(\x)$ holds the following $MN$ elements of $\x$, and $D(\x)$ is a 
diagonal matrix that holds the last $N$ elements of $\x$ on its diagonal.  
Define the product matrix $U(\x):=A(\x)D(\x)B(\x)^\top\in\R^{M \times M}$, and consider the set of points $\x$ for which the rank of 
$U(\x)$ is different from $r:=\min\{M,N\}$.  
This set of points has zero measure.  
The result will also hold if the points $\x$ reside in $\R^{MN+N}$, and the same elements are used to assign $A(\x)$ and $B(\x)$ ($A(\x) \equiv B(\x)$).
\end{lemma}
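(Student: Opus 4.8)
The plan is to exploit the standard principle that a generic matrix product attains the maximal rank its dimensions allow, together with the polynomial-vanishing fact recorded above (a non-identically-zero polynomial on $\R^d$ vanishes only on a set of measure zero). First I would observe that $rank(U(\x)) \le r$ for every $\x$, since $U(\x) = A(\x)D(\x)B(\x)^\top$ is a product of matrices each of rank at most $\min\{M,N\} = r$. Hence the only thing requiring proof is that the rank is \emph{equal} to $r$ almost everywhere, i.e.\ that the set $S := \{\x : rank(U(\x)) < r\}$ has zero measure.

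Next I would translate the rank condition into the vanishing of polynomials. Every entry of $U(\x)$ is a polynomial in the coordinates of $\x$, being bilinear in the entries of $A$ and $B$ and linear in the diagonal of $D$. Consequently each $r\times r$ minor of $U(\x)$ is itself a polynomial in $\x$, and $\x \in S$ precisely when all $r\times r$ minors of $U(\x)$ vanish. In particular, fixing any single $r\times r$ minor $p(\x)$, we have $S \subseteq \{\x : p(\x) = 0\}$. By the polynomial-vanishing fact it therefore suffices to exhibit one such minor $p$ that is not identically zero; equivalently, by the definition of rank, it suffices to produce a single point $\x_0$ at which $rank(U(\x_0)) = r$, since then the minor witnessing this rank is a polynomial that is nonzero at $\x_0$.

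The remaining step is to construct the witness $\x_0$ explicitly. When $M \le N$ (so $r = M$) I would take $A = B = [\,I_M \mid 0\,] \in \R^{M\times N}$ and $D = I_N$, giving $U = [\,I_M \mid 0\,]\,I_N\,[\,I_M \mid 0\,]^\top = I_M$, which has rank $M = r$. When $M > N$ (so $r = N$) I would take $A = B$ equal to the $M\times N$ matrix whose top $N\times N$ block is $I_N$ and whose remaining rows vanish, together with $D = I_N$; then $U = AA^\top$ is the $M\times M$ matrix with $I_N$ in its top-left block and zeros elsewhere, of rank $N = r$. In both cases $A = B$, so the very same construction establishes the claim for the variant with $\x \in \R^{MN+N}$ and $A(\x)\equiv B(\x)$ without any change. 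Assembling the pieces, the chosen $\x_0$ shows that some $r\times r$ minor polynomial $p$ satisfies $p \not\equiv 0$, whence $\{\x : p(\x)=0\}$ has measure zero and so does its subset $S$.

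I do not expect any genuine obstacle here: the content is a routine ``maximal rank is generic'' argument. The only point demanding mild care is the logical direction~--~deducing that $S$ has measure zero from the \emph{existence} of a single full-rank point, rather than attempting to analyze the minors directly~--~and the bookkeeping of the two cases $M \le N$ and $M > N$ in the explicit construction.
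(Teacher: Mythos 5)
Your proposal is correct and follows essentially the same route as the paper's proof: bound the rank by $r$ trivially, reduce the almost-everywhere statement to the non-vanishing of an $r \times r$ minor polynomial, and exhibit an explicit full-rank witness with identity-like $A = B$ and $D = I$, which simultaneously handles the shared variant. Your two-case construction ($M \le N$ versus $M > N$) is just an unpacked version of the paper's single witness (ones on the main diagonal of the rectangular matrices, zeros elsewhere), so there is no substantive difference.
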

\begin{proof}
Obviously $rank(U(\x)) \leq r$ for all $\x$, so it remains to show that $rank(U(\x)) \geq r$ for all $\x$ but a set of zero measure.  
Let $U_r(\x)$ be the top-left $r \times r$ sub-matrix of $U(\x)$.  
If $U_r(\x)$ is non-singular then of course $rank(U(\x)) \geq r$ as required.  
It thus suffices to show that the set of points $\x$ for which $\det U_r(\x)=0$ has zero measure.  
Now, $\det U_r(\x)$ is a polynomial in the entries of $\x$, and so it either vanishes on a set of zero measure, or it is the zero polynomial (see~\cite{caron2005zero}).  
All that is left is to disqualify the latter option, and that can be done by finding a specific point $\x_0$ for which $\det U_r(\x_0)\neq0$.  
Indeed, we may choose $\x_0$ such that $D(\x_0)$ is the identity matrix and $A(\x_0),B(\x_0)$ hold $1$ on their main diagonal and $0$ otherwise.  
This selection implies that $U_r(\x_0)$ is the identity matrix, and in particular $\det U_r(\x_0)\neq0$.
\end{proof}

\begin{lemma} \label{lemma:step}
Assume we have $p$ continuous mappings from $\R^d$ to $\R^{M \times N}$ taking the point $\y$ to the matrices $A_1(\y){\ldots}A_p(\y)$.  
Assume that under these mappings, the points $\y$ for which every $i\in[p]$ satisfies $rank(A_i(\y))<r$ form a set of zero measure.  
Define a mapping from $\R^p \times \R^d$ to $R^{M \times N}$ given by $(\x,\y) \mapsto A(\x,\y):=\sum_{i=1}^p x_i \cdot A_i(\y)$.  
Then, the points $(\x,\y)$ for which $rank(A(\x,\y))<r$ form a set of zero measure.
\end{lemma}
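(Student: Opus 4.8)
The plan is to show that the ``bad set'' $\{(\x,\y) : \text{rank}(A(\x,\y)) < r\}$ has zero measure in $\R^p \times \R^d$. As in lemma~\ref{lemma:base}, I would exploit the fact that having rank at least $r$ is equivalent to the non-vanishing of \emph{some} $r\times r$ minor, and that each such minor is a polynomial in the entries of the matrix. Since $A(\x,\y) = \sum_{i=1}^p x_i A_i(\y)$ depends polynomially on $\x$ (with coefficients that are continuous functions of $\y$ through the $A_i(\y)$), a fixed $r\times r$ minor of $A(\x,\y)$ is, for each fixed $\y$, a polynomial in the variables $x_1{\ldots}x_p$.

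\medskip

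\textbf{The conditional (fixed $\y$) argument.} First I would fix $\y$ outside the zero-measure set guaranteed by the hypothesis, so that at least one index $i_0 \in [p]$ has $\text{rank}(A_{i_0}(\y)) \geq r$. For such a $\y$, I want to produce a single $r\times r$ minor of $A(\x,\y)$ that is a \emph{nonzero} polynomial in $\x$. This is the crux: I pick the minor corresponding to the $r$ rows and $r$ columns that witness the rank of $A_{i_0}(\y)$, and evaluate the minor polynomial at the specific point $\x = \e_{i_0}$ (the $i_0$-th standard basis vector). At that point $A(\e_{i_0},\y) = A_{i_0}(\y)$, whose chosen submatrix is nonsingular, so the minor is nonzero there. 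Hence the minor, viewed as a polynomial in $\x\in\R^p$, is not identically zero, and by the polynomial-zero-set fact (\cite{caron2005zero}) the set of $\x$ for which it vanishes has zero measure in $\R^p$. Consequently, for each such fixed $\y$, the set $\{\x : \text{rank}(A(\x,\y)) < r\}$ is contained in the zero set of a nonzero polynomial and thus has zero measure in $\R^p$.

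\medskip

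\textbf{Assembling via Fubini.} Having controlled the bad set fiber-by-fiber, I would integrate over $\y$. Writing $E := \{(\x,\y) : \text{rank}(A(\x,\y)) < r\} \subset \R^p \times \R^d$, the indicator $\indc{(\x,\y) \in E}$ is measurable (the minors are continuous, hence the rank condition defines a measurable set). By Tonelli/Fubini, the $(p+d)$-dimensional measure of $E$ equals $\int_{\R^d}\big(\int_{\R^p}\indc{(\x,\y)\in E}\,d\x\big)\,d\y$. The inner integral is the $p$-dimensional measure of the $\y$-fiber of $E$, which I just argued is zero for every $\y$ outside a zero-measure set $Z\subset\R^d$. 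Thus the inner integral is zero for a.e.\ $\y$, so the whole integral is zero and $E$ has zero measure. Alternatively, to avoid invoking Fubini directly, I could combine the fact that $Z\times\R^p$ has zero measure (third listed fact, after reordering coordinates) with a fibered-null-set argument; Fubini is the cleanest route.

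\medskip

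\textbf{Main obstacle.} The only delicate point is the selection of the minor: the witnessing row/column index set depends on $\y$, so a priori a \emph{single} fixed minor need not work uniformly. The clean fix is that I only need the fiberwise statement, so for each fixed $\y$ I am free to choose the minor depending on $\y$; uniformity in $\y$ is unnecessary because the measurability of $E$ (handled globally, since finitely many minors are each continuous) and the Fubini step do the gluing. Making sure the measurability of $E$ is argued before applying Fubini, and that the evaluation $A(\e_{i_0},\y)=A_{i_0}(\y)$ is used to certify a nonzero polynomial, are the steps I would state carefully.
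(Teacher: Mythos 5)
Your proposal is correct and follows essentially the same route as the paper's proof: a fiberwise argument (fixing $\y$ outside the bad set, certifying a nonzero minor polynomial in $\x$ by evaluating at a standard basis vector), measurability of the bad set, and Fubini to glue the fibers. The only cosmetic difference is that you invoke Tonelli directly for the non-negative indicator, whereas the paper truncates to hyper-rectangles $[-n,n]^{p+d}$ and passes to the limit via monotone convergence; both are valid.
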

\begin{proof}
Denote $S:=\{(\x,\y):rank(A(\x,\y))<r\}\subset\R^p \times \R^d$.  
We would like to show that this set has zero measure.  
We first note that since $A(\x,\y)$ is a continuous mapping, and the set of matrices $A \in \R^{M \times N}$ which have rank less than $r$ is closed, $S$ is a closed set and in particular measurable.  
Our strategy for computing its measure will be as follows.  
For every $\y\in\R^d$ we define the marginal set $S^{\y}:=\{\x:rank(A(\x,\y))<r\}\subset\R^p$.  
We will show that for every $\y$ but a set of zero measure, the measure of $S^{\y}$ is zero.  
An application of Fubini's theorem will then prove the desired result.

Let $C$ be the set of points $\y\in\R^d$ for which $\forall i\in[p]:rank(A_i(\y)) < r$.  
By assumption, $C$ has zero measure. 
We now show that for $\y_0 \in \R^d \setminus C$, the measure of $S^{\y_0}$ is zero.
By the definition of $C$ there exists an $i\in[p]$ such that $rank(A_i(\y_0)) \geq r$.  
W.l.o.g., we assume that $i=1$, and that the top-left $r \times r$ sub-matrix of $A_1(\y_0)$ is non-singular.  
Regarding $\y_0$ as fixed, the determinant of the top-left $r \times r$ sub-matrix of $A(\x,\y_0)$ is a polynomial in the elements 
of $\x$.  
It is not the zero polynomial, as setting $x_1=1,x_2=\cdots=x_p=0$ yields $A(\x,\y_0)=A_1(\y_0)$, and the determinant of the latter's top-left $r \times r$ sub-matrix is non-zero.  
As a non-zero polynomial, the determinant of the top-left $r \times r$ sub-matrix of $A(\x,\y_0)$ vanishes only on a set of zero measure~(\cite{caron2005zero}).  
This implies that indeed the measure of $S^{\y_0}$ is zero.

We introduce a few notations towards our application of Fubini's theorem.  
First, the symbol $\1$ will be used to represent indicator functions, e.g. $\1_S$ is the function from $\R^p \times \R^d$ to $\R$ that receives $1$ on $S$ and $0$ elsewhere.  
Second, we use a subscript of $n\in\N$ to indicate that the corresponding set is intersected with the hyper-rectangle of radius $n$.  
For example, $S_n$ stands for the intersection between $S$ and $[-n,n]^{p+d}$, and $\R^d_n$ stands for the intersection between $\R^d$ and $[-n,n]^d$ (which is equal to the latter).  
All the sets we consider are measurable, and those with subscript $n$ have finite measure.  
We may thus apply Fubini's theorem to get:
$$\int_{(\x,\y)} \1_{S_n} = \int_{(\x,\y)\in\R^{p+d}_n} \1_S
= \int_{\y\in\R^d_n} \int_{\x\in\R^p_n} \1_{S^y}
= \int_{\y\in\R^d_n \cap C} \int_{\x\in\R^p_n} \1_{S^y}
+\int_{\y\in\R^d_n \setminus C} \int_{\x\in\R^p_n} \1_{S^y}$$
Recall that the set $C\in\R^d$ has zero measure, and for every $\y \notin C$ the measure of $S^y\in\R^p$ is zero.  
This implies that both integrals in the last expression vanish, and thus $\int \1_{S_n} = 0$.  
Finally, we use the monotone convergence theorem to compute $\int \1_S$:
$$ \int \1_S = \int \lim_{n\to\infty} \1_{S_n} = \lim_{n\to\infty}\int \1_{S_n} = \lim_{n\to\infty} 0 = 0$$
This shows that indeed our set of interest $S$ has zero measure.
\end{proof}

\medskip

With all preliminaries and lemmas in place, we turn to prove thm.~\ref{thm:fundamental}, establishing an exponential efficiency of HT decomposition (eq.~\ref{eq:ht_decomp}) over CP decomposition (eq.~\ref{eq:cp_decomp}).

\medskip

\begin{proof}[\textbf{of theorem~\ref{thm:fundamental}}]
We begin with the case of an ``unshared'' composition, i.e. the one given in eq.~\ref{eq:ht_decomp} (as opposed to the ``shared'' setting of $\aaa^{l,j,\gamma}\equiv\aaa^{l,\gamma}$).  
Denoting for convenience $\phi^{L,1,1}:=\A^y$ and $r_L=1$, we will show by induction over $l=1,...,L$ that almost everywhere (at all points but a set of zero measure) w.r.t. $\{\aaa^{l,j,\gamma}\}_{l,j,\gamma}$, all CP-ranks of the tensors $\{\phi^{l,j,\gamma}\}_{j\in[\nicefrac{N}{2^l}],\gamma\in[r_l]}$ are at least $r^{\nicefrac{2^l}{2}}$.  
In accordance with our discussion in the beginning of this subsection, it suffices to consider the matricizations $[\phi^{l,j,\gamma}]$, and show that these all have ranks greater or equal to $r^{\nicefrac{2^l}{2}}$ almost everywhere.

For the case $l=1$ we have:
$$\phi^{1,j,\gamma} = \sum_{\alpha=1}^{r_0} a_\alpha^{1,j,\gamma} \aaa^{0,2j-1,\alpha} \otimes  \aaa^{0,2j,\alpha}$$
Denote by $A \in \R^{M \times r_0}$ the matrix with columns $\{\aaa^{0,2j-1,\alpha}\}_{\alpha=1}^{r_0}$,
by $B \in \R^{M \times r_0}$ the matrix with columns $\{\aaa^{0,2j,\alpha}\}_{\alpha=1}^{r_0}$, and by
$D \in \R^{r_0 \times r_0}$ the diagonal matrix with $\aaa^{1,j,\gamma}$ on its diagonal.  
Then, we may write $[\phi^{1,j,\gamma}]=ADB^\top$, and according to lemma~\ref{lemma:base} the rank of $[\phi^{1,j,\gamma}]$ equals $r:=\min\{r_0,M\}$ almost everywhere w.r.t. $\left(\{\aaa^{0,2j-1,\alpha}\}_\alpha,\{\aaa^{0,2j,\alpha}\}_\alpha,\aaa^{1,j,\gamma}\right)$.  
To see that this holds almost everywhere w.r.t. $\{\aaa^{l,j,\gamma}\}_{l,j,\gamma}$, one should merely recall that for any dimensions $d_1,d_2 \in \N$, if the set $S \subset \R^{d_1}$ has zero measure, so does any subset of $S \times \R^{d_2} \subset \R^{d_1+d_2}$.  
A finite union of zero measure sets has zero measure, thus the fact that $rank[\phi^{1,j,\gamma}]=r$ holds almost everywhere individually for any $j\in\left[\nicefrac{N}{2}\right]$ and $\gamma\in[r_1]$, implies that it holds almost everywhere jointly for all $j$ and $\gamma$.  
This proves our inductive hypothesis (unshared case) for $l=1$.

Assume now that almost everywhere $rank[\phi^{l-1,j',\gamma'}] \geq r^{\nicefrac{2^{l-1}}{2}}$ for all $j'\in[\nicefrac{N}{2^{l-1}}]$ and $\gamma'\in[r_{l-1}]$.  
For some specific choice of $j\in[\nicefrac{N}{2^l}]$ and $\gamma\in[r_l]$ we have:
$$\phi^{l,j,\gamma} = \sum_{\alpha=1}^{r_{l-1}} a_\alpha^{l,j,\gamma} \phi^{l-1,2j-1,\alpha} \otimes  \phi^{l-1,2j,\alpha}
\implies [\phi^{l,j,\gamma}] = \sum_{\alpha=1}^{r_{l-1}} a_\alpha^{l,j,\gamma} [\phi^{l-1,2j-1,\alpha}] \odot [\phi^{l-1,2j,\alpha}]$$
Denote $M_\alpha := [\phi^{l-1,2j-1,\alpha}] \odot [\phi^{l-1,2j,\alpha}]$ for $\alpha=1{\ldots}r_{l-1}$.
By our inductive assumption, and by the general property $rank(A \odot B)=rank(A){\cdot}rank(B)$, we have that almost everywhere the ranks of all matrices $M_\alpha$ are at least $r^{\nicefrac{2^{l-1}}{2}}\cdot r^{\nicefrac{2^{l-1}}{2}}=r^{\nicefrac{2^l}{2}}$.  
Writing $[\phi^{l,j,\gamma}] = \sum_{\alpha=1}^{r_{l-1}} a_\alpha^{l,j,\gamma} \cdot M_\alpha$, and noticing that $\{M_\alpha\}$ do not depend on $\aaa^{l,j,\gamma}$, we turn our attention to lemma~\ref{lemma:step}.
The lemma tells us that $rank[\phi^{l,j,\gamma}] \geq r^{\nicefrac{2^l}{2}}$ almost everywhere.  
Since a finite union of zero measure sets has zero measure, we conclude that almost everywhere $rank[\phi^{l,j,\gamma}] \geq r^{\nicefrac{2^l}{2}}$ holds jointly for all $j\in[\nicefrac{N}{2^l}]$ and $\gamma\in[r_l]$.  
This completes the proof of the theorem in the unshared case.

Proving the theorem in the shared case may be done in the exact same way, except that for $l=1$ one needs
the version of lemma~\ref{lemma:base} for which $A(\x)$ and $B(\x)$ are equal.
\end{proof}

\medskip

We now head on to prove thm.~\ref{thm:generalized}, which is a generalization of thm.~\ref{thm:fundamental}.  
The proof will be similar in nature to that of thm.~\ref{thm:fundamental}, yet slightly more technical.  
In short, the idea is to show that in the generic case, expressing $\A^{(1)}$ as a sum of tensor products between tensors of order $2^{L_2-1}$ requires at least $r^{\nicefrac{N}{2^{L_2}}}$ terms.  
Since $\A^{(2)}$ is expressed as a sum of $r_{L_2-1}$ such terms, demanding $\A^{(2)}=\A^{(1)}$ implies $r_{L_2-1} \geq r^{\nicefrac{N}{2^{L_2}}}$.

To gain technical advantage and utilize known results from matrix theory (as we did when proving thm.~\ref{thm:fundamental}), we introduce a new tensor ``squeezing'' operator $\varphi$.  
For $q\in\N$, $\varphi_q$ is an operator that receives a tensor with order divisible by $q$, and returns the tensor obtained by merging 
together the latter's modes in groups of size $q$.  
Specifically, when applied to the tensor $\A\in\R^{M_1{\times\cdots\times}M_{c \cdot q}}$ ($c \in \N$), $\varphi_q$ returns a tensor of order $c$ which holds $\A_{d_1{\ldots}d_{c \cdot q}}$ in the location defined by the following index for every mode $t \in [c]$: $1+\sum_{i=1}^{q}(d_{i+q(t-1)}-1)\prod_{j=i+1}^{q}M_{j+q(t-1)}$.  
Notice that when applied to a tensor of order $q$, $\varphi_q$ returns a vector.  
Also note that if $\A$ and $\B$ are tensors with orders divisible by $q$, and $\lambda$ is a scalar, we have the desirable properties:
\begin{itemize}
\item $\varphi_q(\A\otimes\B)=\varphi_q(\A)\otimes\varphi_q(\B)$
\item $\varphi_q(\lambda\A+\B)=\lambda\varphi_q(\A)+\varphi_q(\B)$
\end{itemize}
For the sake of our proof we are interested in the case $q=2^{L_2-1}$, and denote for brevity $\varphi := \varphi_{2^{L_2-1}}$.

As stated above, we would like to show that in the generic case, expressing $\A^{(1)}$ as $\sum_{z=1}^Z \phi^{(z)}_1 \otimes\cdots\otimes \phi^{(z)}_{\nicefrac{N}{2^{L_2-1}}}$, where $\phi^{(z)}_i$ are tensors of order $2^{L_2-1}$, implies $Z \geq r^{\nicefrac{N}{2^{L_2}}}$.  
Applying $\varphi$ to both sides of such a decomposition gives: $\varphi(\A^{(1)})=\sum_{z=1}^Z \varphi(\phi^{(z)}_1) \otimes\cdots\otimes \varphi(\phi^{(z)}_{\nicefrac{N}{2^{L_2-1}}})$, where $\varphi(\phi^{(z)}_i)$ are now vectors.  
Thus, to prove thm.~\ref{thm:generalized} it suffices to show that in the generic case, the CP-rank of $\varphi(\A^{(1)})$ is at least $r^{\nicefrac{N}{2^{L_2}}}$, or alternatively, that the rank of the matricization $[\varphi(\A^{(1)})]$ is at least $r^{\nicefrac{N}{2^{L_2}}}$.  
This will be our strategy in the following proof:

\medskip

\begin{proof}[\textbf{of theorem~\ref{thm:generalized}}]
In accordance with the above discussion, it suffices to show that in the generic case $rank[\varphi(\A^{(1)})] \geq r^{\nicefrac{N}{2^{L_2}}}$.  
To ease the path for the reader, we reformulate the problem using slightly simpler notations.
We have an order-$N$ tensor $\A$ with dimension $M$ in each mode, generated as follows:
\beas
\phi^{1,j,\gamma} &=& \sum_{\alpha=1}^{r_0} a_\alpha^{1,j,\gamma} \aaa^{0,2j-1,\alpha} \otimes  \aaa^{0,2j,\alpha} \\
&\vdots& \\
\phi^{l,j,\gamma} &=& \sum_{\alpha=1}^{r_{l-1}} a_\alpha^{l,j,\gamma} \underbrace{\phi^{l-1,2j-1,\alpha}}_{\text{order $2^{l-1}$}} \otimes  \underbrace{\phi^{l-1,2j,\alpha}}_{\text{order $2^{l-1}$}} \\
&\vdots& \\
\A  &=& \sum_{\alpha=1}^{r_{L_1 - 1}} a^{L_1,1,1}_\alpha \mathop{\otimes}_{j=1}^{2^{L-L_1+1}} \underbrace{\phi^{L_1-1,j,\alpha}}_{\text{order $2^{L_1 - 1}$}}
\eeas
where:
\begin{itemize}
\item $L_1 \leq L := \log_2 N$
\item $r_0,...,r_{L_1 - 1}\in\N_{>0}$
\item $\aaa^{0,j,\alpha} \in \R^M$ for $j\in[N]$ and $\alpha\in[r_0]$
\item $\aaa^{l,j,\gamma}\in\R^{r_{l-1}}$ for $l\in[L_1-1]$, $j\in[\nicefrac{N}{2^l}]$ and $\gamma\in[r_l]$
\item $\aaa^{L_1,1,1}\in\R^{r_{L_1 - 1}}$
\end{itemize}
Let $L_2$ be a positive integer smaller than $L_1$, and let $\varphi$ be the tensor squeezing operator that merges groups of $2^{L_2-1}$ modes.  
Define $r:=\min\{r_0,...,r_{L_2-1},M\}$.  
With $[\cdot]$ being the matricization operator defined in the beginning of the appendix, our task is to prove that $rank[\varphi(\A)] \geq r^{\nicefrac{N}{2^{L_2}}}$ almost everywhere w.r.t. $\{\aaa^{l,j,\gamma}\}_{l,j,\gamma}$.  
We also consider the case of shared parameters~--~$\aaa^{l,j,\gamma}\equiv\aaa^{l,\gamma}$, where we would like to show that the same condition holds almost everywhere w.r.t. $\{\aaa^{l,\gamma}\}_{l,\gamma}$.

Our strategy for proving the claim is inductive.
We show that for $l=L_2{\ldots}L_1-1$, almost everywhere it holds that for all $j$ and all $\gamma$: $rank[\varphi(\phi^{l,j,\gamma})] \geq r^{2^{l-L_2}}$.  
We then treat the special case of $l=L_1$, showing that indeed $rank[\varphi(\A)] \geq r^{\nicefrac{N}{2^{L_2}}}$.  
We begin with the setting of unshared parameters ($\aaa^{l,j,\gamma}$), and afterwards attend the scenario of shared parameters ($\aaa^{l,\gamma}$) as well.

Our first task is to treat the case $l=L_2$, i.e. show that $rank[\varphi(\phi^{L_2,j,\gamma})] \geq r$ almost everywhere jointly for all $j$ and all $\gamma$ (there is actually no need for the matricization $[\cdot]$ here, as $\varphi(\phi^{L_2,j,\gamma})$ are already matrices).  
Since a union of finitely many zero measure sets has zero measure, it suffices to show that this condition holds almost everywhere when specific $j$ and $\gamma$ are chosen.  
Denote by $\e_i$ a vector holding $1$ in entry $i$ and $0$ elsewhere, by $\0$ a vector of zeros, and by $\1$ a vector of ones.  
Suppose that for every $j$ we assign $\aaa^{0,j,\alpha}$ to be $\e_\alpha$ when $\alpha \leq r$ and $\0$ otherwise.  
Suppose also that for all $1 \leq l \leq L_2-1$ and all $j$ we set $\aaa^{l,j,\gamma}$ to be $\e_\gamma$ when $\gamma \leq r$ and $\0$ otherwise.  
Finally, assume we set $\aaa^{L_2,j,\gamma}=\1$ for all $j$ and all $\gamma$.  
These settings imply that for every $j$, when $\gamma \leq r$ we have $\phi^{L_2-1,j,\gamma}=\mathop{\otimes}_{j=1}^{2^{L_2-2}} 
(\e_\gamma \otimes \e_\gamma)$, i.e. the tensor $\phi^{L_2-1,j,\gamma}$ holds $1$ in location $(\gamma,...,\gamma)$ and $0$ elsewhere.  
If $\gamma > r$ then $\phi^{L_2-1,j,\gamma}$ is the zero tensor.  
We conclude from this that there are indices $1 \leq i_1 < ... < i_r \leq M^{L_2-1}$ such that $\varphi(\phi^{L_2-1,j,\gamma})=\e_{i_\gamma}$ for $\gamma \leq r$, and that for $\gamma > r$ we have $\varphi(\phi^{L_2-1,j,\gamma})=\0$.  
We may thus write:
$$\varphi(\phi^{L_2,j,\gamma}) 
= \varphi\left(\sum_{\alpha=1}^{r_{L_2-1}}\phi^{L_2-1,2j-1,\alpha} \otimes  \phi^{L_2-1,2j,\alpha} \right) 
= \sum_{\alpha=1}^{r_{L_2-1}} \varphi(\phi^{L_2-1,2j-1,\alpha}) \otimes  \varphi(\phi^{L_2-1,2j,\alpha}) 
= \sum_{\alpha=1}^r\e_{i_\alpha}\e_{i_\alpha}^\top$$
Now, since $i_1{\ldots}i_r$ are different from each other, the matrix $\varphi(\phi^{L_2,j,\gamma})$ has rank $r$.  
This however does not prove our inductive hypothesis for $l=L_2$.  
We merely showed a specific parameter assignment for which it holds, and we need to show that it is met almost everywhere.  
To do so, we consider an $r \times r$ sub-matrix of $\varphi(\phi^{L_2,j,\gamma})$ which is non-singular under the specific parameter
assignment we defined.  
The determinant of this sub-matrix is a polynomial in the elements of $\{\aaa^{l,j,\gamma}\}_{l,j,\gamma}$ which we know does not vanish with the specific assignments defined.
Thus, this polynomial vanishes at subset of $\{\aaa^{l,j,\gamma}\}_{l,j,\gamma}$ having zero measure (see~\cite{caron2005zero}).  
That is to say, the sub-matrix of $\varphi(\phi^{L_2,j,\gamma})$ has rank $r$ almost everywhere, and thus $\varphi(\phi^{L_2,j,\gamma})$ has rank at least $r$ almost everywhere.  
This completes our treatment of the case $l=L_2$.

We now turn to prove the propagation of our inductive hypothesis.  
Let $l\in\{L_2+1,...,L_1-1\}$, and assume that our inductive hypothesis holds for $l-1$.  
Specifically, assume that almost everywhere w.r.t. $\{\aaa^{l,j,\gamma}\}_{l,j,\gamma}$, we have that $rank[\varphi(\phi^{l-1,j,\gamma})] \geq r^{2^{l-1-L_2}}$ jointly for all $j\in[\nicefrac{N}{2^{l-1}}]$ and all $\gamma\in[r_{l-1}]$.  
We would like to show that almost everywhere, $rank[\varphi(\phi^{l,j,\gamma})] \geq r^{2^{l-L_2}}$ jointly for all $j\in[\nicefrac{N}{2^l}]$ and all $\gamma\in[r_l]$.  
Again, the fact that a finite union of zero measure sets has zero measure implies that we may prove the condition for specific $j\in[\nicefrac{N}{2^l}]$ and $\gamma\in[r_l]$.  
Applying the squeezing operator $\varphi$ followed by matricization $[\cdot]$ to the recursive expression for $\phi^{l,j,\gamma}$, we get:
\beas
[\varphi(\phi^{l,j,\gamma})] &=& \left[\varphi\left(\sum_{\alpha=1}^{r_{l-1}} a_\alpha^{l,j,\gamma} \phi^{l-1,2j-1,\alpha} \otimes  \phi^{l-1,2j,\alpha}\right)\right] 
   =   \left[\sum_{\alpha=1}^{r_{l-1}} a_\alpha^{l,j,\gamma} \varphi(\phi^{l-1,2j-1,\alpha}) \otimes  \varphi(\phi^{l-1,2j,\alpha})\right] \\
 &=& \sum_{\alpha=1}^{r_{l-1}} a_\alpha^{l,j,\gamma} [\varphi(\phi^{l-1,2j-1,\alpha})] \odot  [\varphi(\phi^{l-1,2j,\alpha})]
\eeas
For $\alpha=1{\ldots}r_{l-1}$, denote the matrix $[\varphi(\phi^{l-1,2j-1,\alpha})] \odot [\varphi(\phi^{l-1,2j,\alpha})]$ by $M_\alpha$.  
The fact that the Kronecker product multiplies ranks, along with our inductive assumption, imply that almost everywhere $rank(M_\alpha) \geq r^{2^{l-1-L_2}} \cdot r^{2^{l-1-L_2}} = r^{2^{l-L_2}}$.
Noting that the matrices $M_\alpha$ do not depend on $\aaa^{l,j,\gamma}$, we apply lemma~\ref{lemma:step} and conclude that almost everywhere $rank[\varphi(\phi^{l,j,\gamma})] \geq r^{2^{l-L_2}}$, which completes the prove of the inductive propagation.

Next, we treat the special case $l=L_1$.  
We assume now that almost everywhere $rank[\varphi(\phi^{L_1-1,j,\gamma})] \geq r^{2^{L_1-1-L_2}}$ jointly for all $j$ and all $\gamma$.
Again, we apply the squeezing operator $\varphi$ followed by matricization $[\cdot]$, this time to both sides of the expression for $\A$:
$$ [\varphi(\A)]  = \sum_{\alpha=1}^{r_{L_1 - 1}} a^{L_1,1,1}_\alpha \mathop{\odot}_{j=1}^{2^{L-L_1+1}} [\varphi(\phi^{L_1-1,j,\alpha})] $$
As before, denote $M_\alpha := \mathop{\odot}_{j=1}^{2^{L-L_1+1}} [\varphi(\phi^{L_1-1,j,\alpha})]$ for $\alpha=1{\ldots}r_{L_1-1}$.  
Using again the multiplicative rank property of the Kronecker product along with our inductive assumption, we get that almost everywhere $rank(M_\alpha) \geq \prod_{j=1}^{2^{L-L_1+1}}r^{2^{L_1-1-L_2}} = r^{L-L_2}$.  
Noticing that $\{M_\alpha\}_{\alpha\in[r_{L_1-1}]}$ do not depend on $\aaa^{L_1,1,1}$, we apply lemma~\ref{lemma:step} for the last time and get that almost everywhere (w.r.t. $\{\aaa^{l,j,\gamma}\}_{l,j,\gamma}$), the rank of $[\varphi(\A)]$ is at least $r^{L-L_2}$.  This completes our proof in the case of unshared parameters.

Proving the theorem in the case of shared parameters ($\aaa^{l,j,\gamma} \equiv \aaa^{l,\gamma}$) can be done in the exact same way as above.  
In fact, all one has to do is omit the references to $j$ and the proof will apply.  
Notice in particular that the specific parameter assignment we defined to handle $l=L_2$ was completely symmetric, i.e. it did not include any dependence on $j$.
\end{proof}

\subsection{Proof of Corollaries~\ref{corollary:fundamental} and~\ref{corollary:generalized}} \label{app:proofs:corollary}

Corollaries~\ref{corollary:fundamental} and~\ref{corollary:generalized} are a direct continuation of thm.~\ref{thm:fundamental} and~\ref{thm:generalized} respectively.
In the theorems, we have shown that almost all coefficient tensors generated by a deep network cannot be realized by a shallow network if the latter does not meet a certain minimal size requirement.
The corollaries take this further, by stating that given linearly independent representation functions $f_{\theta_1}{\ldots}f_{\theta_M}$, not only is efficient realization of coefficient tensors generally impossible, but also efficient approximation of score functions.
To prove this extra step, we recall from the proofs of thm.~\ref{thm:fundamental} and~\ref{thm:generalized} (app.~\ref{app:proofs:thm}) that in order to show separation between the coefficient tensor of a deep network and that of a shallow network, we relied on matricization rank.
Specifically, we derived constants $R^D,R^S\in\N$, $R^D>R^S$, such that the matricization of a deep network's coefficient tensor had rank greater or equal to $R^D$, whereas the matricization of a shallow network's coefficient tensor had rank smaller or equal to $R^S$.
Given this observation, corollaries~\ref{corollary:fundamental} and~\ref{corollary:generalized} readily follow from lemma~\ref{lemma:approx} below (the lemma relies on basic concepts and results from the topic of $L^2$ Hilbert spaces~--~see app.~\ref{app:hypo_space:preliminaries} for a brief discussion on the matter).

\begin{lemma} \label{lemma:approx}
Let $f_{\theta_1}{\ldots}f_{\theta_M}{\in}L^2(\R^s)$ be a set of linearly independent functions, and denote by $\T$ the (Euclidean) space of tensors with order $N$ and dimension $M$ in each mode.  
For a given tensor $\A\in\T$, denote by $h(\A)$ the function in $L^2\left((\R^s)^N\right)$ defined by:
$$(\x_1,\ldots,\x_N) \overset{h(\A)}{\mapsto}\sum_{d_1,\ldots,d_N=1}^M\A_{d_1{\ldots}d_N}\prod_{i=1}^{N}f_{\theta_{d_i}}(\x_i)$$
Let $\{\A^\lambda\}_{\lambda\in\Lambda}\subset\T$ be a family of tensors, and $\A^*$ be a certain target tensor that lies outside the family.
Assume that for all $\lambda\in\Lambda$ we have $rank([\A^\lambda])<rank([\A^*])$, where $[\cdot]$ is the matricization operator defined in app.~\ref{app:proofs:thm}.
Then, the distance in $L^2\left((\R^s)^N\right)$ between $h(\A^*)$ and $\{h(\A^\lambda)\}_{\lambda\in\Lambda}$ is strictly 
positive, i.e. there exists an $\epsilon>0$ such that:
$$\forall{\lambda\in\Lambda}: \int \abs{h(\A^\lambda)-h(\A^*)}^2 > \epsilon $$
\end{lemma}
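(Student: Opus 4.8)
The plan is to leverage the finite-dimensionality of $\T$ to upgrade the given separation in matricization rank into a uniform lower bound on $L^2$ distance. First I would record that $\A\mapsto h(\A)$ is a linear map, and that by the linear independence of $f_{\theta_1}{\ldots}f_{\theta_M}$ the product functions $\{(\x_1,\ldots,\x_N)\mapsto\prod_{i=1}^{N}f_{\theta_{d_i}}(\x_i)\}_{d_1{\ldots}d_N\in[M]}$ are linearly independent in $L^2\left((\R^s)^N\right)$ (as noted in app.~\ref{app:hypo_space:preliminaries}). Hence $h$ is injective, and maps the finite-dimensional space $\T$ isomorphically onto the finite-dimensional subspace $V:=h(\T)\subset L^2\left((\R^s)^N\right)$. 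Consequently $\A\mapsto\norm{h(\A)}_{L^2}$ is a genuine norm on $\T$ (positivity uses injectivity), and since all norms on a finite-dimensional space are equivalent, there exists a constant $c>0$ such that $\norm{h(\A)}_{L^2}\geq c\norm{\A}$ for every $\A\in\T$, where $\norm{\cdot}$ is the Euclidean (Frobenius) norm on tensors.

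Next I would reduce the $L^2$ claim to a distance estimate inside $\T$. Setting $R:=rank([\A^*])$, the hypothesis places every $\A^\lambda$ in the set $C:=\{\A\in\T:rank([\A])\leq R-1\}$, while $\A^*\notin C$. The key geometric fact is that $C$ is \emph{closed}: the matricization $\A\mapsto[\A]$ is linear, hence continuous, and the set of matrices of rank at most $R-1$ is closed because matrix rank is lower semicontinuous (if all $R\times R$ minors of a sequence of matrices vanish, they vanish in the limit). Because $\A^*$ lies in the open complement of $C$, the Euclidean distance $d:=\mathrm{dist}(\A^*,C)=\inf_{\A\in C}\norm{\A-\A^*}$ is strictly positive.

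Finally I would combine the two steps: for every $\lambda\in\Lambda$,
$$\int\abs{h(\A^\lambda)-h(\A^*)}^2=\norm{h(\A^\lambda-\A^*)}_{L^2}^2\geq c^2\norm{\A^\lambda-\A^*}^2\geq c^2 d^2,$$
so choosing $\epsilon:=\tfrac{1}{2}c^2 d^2>0$ establishes the lemma. The conceptual crux~--~and what I expect to be the only genuinely subtle point~--~is the closedness of $C$. This is precisely where the argument avoids the border-rank pitfall mentioned in sec.~\ref{sec:theorems:proof_sketch}: the set of low \emph{CP}-rank tensors is \emph{not} closed, but by phrasing the separation in terms of ordinary matricization rank (whose sublevel sets are closed), a strictly positive gap between $\A^*$ and the entire family is guaranteed, and norm equivalence transports this gap faithfully from $\T$ into $L^2$.
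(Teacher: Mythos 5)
Your proof is correct and follows essentially the same route as the paper's: both reduce the $L^2$ separation to a Euclidean separation in the coefficient-tensor space $\T$ (using finite-dimensionality together with linear independence of the product functions), and both then obtain the positive gap from the fact that the set of matrices of rank at most $rank([\A^*])-1$ is closed and excludes $[\A^*]$. The only difference is presentational: you make the finite-dimensional transfer quantitative via norm equivalence and spell out the closedness of the bounded-rank set (lower semicontinuity of rank), whereas the paper argues sequentially~--~convergence in $L^2$ norm implies convergence of coefficient tensors~--~and leaves the closedness fact implicit in its final step.
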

\begin{proof}
The fact that $\{f_{\theta_d}(\x)\}_{d\in[M]}$ are linearly independent in $L^2(\R^s)$ implies that the product functions $\{\prod_{i=1}^N f_{\theta_{d_i}}(\x_i) \}_{d_1{\ldots}d_N\in[M]}$ are linearly independent in $L^2\left((\R^s)^N\right)$ (see app.~\ref{app:hypo_space:preliminaries}).  
Let $(h^{(t)})_{t=1}^\infty$ be a sequence of functions that lie in the span of $\{\prod_{i=1}^{N}f_{\theta_{d_i}}(\x_i)\}_{d_1{\ldots}d_N\in[M]}$, and for every $t\in\N$ denote by $\A^{(t)}$ the coefficient tensor of $h^{(t)}$ under this basis, i.e. $\A^{(t)}\in\T$ is defined by:
$$h^{(t)}\left(\x_1,\ldots,\x_N\right)=\sum_{d_1,\ldots,d_N=1}^M \A_{d_1,\ldots,d_N}^{(t)}\prod_{i=1}^{N} f_{\theta_{d_i}}(\x_i)$$
Assume that $(h^{(t)})_{t=1}^\infty$ converges to $h(\A^*)$ in $L^2\left((\R^s)^N\right)$:
$$\lim_{t\to\infty}\int\abs{h^{(t)}-h(\A^*)}^2=0$$
In a finite-dimensional Hilbert space, convergence in norm implies convergence in representation coefficients under any preselected basis.  
We thus have:
$$\forall{d_1{\ldots}d_N\in[M]}:\A_{d_1,\ldots,d_N}^{(t)} \xrightarrow{t\to\infty} \A_{d_1,\ldots,d_N}^*$$
This means in particular that in the tensor space $\T$, $\A^*$ lies in the closure of $\{\A^{(t)}\}_{t=1}^\infty$.
Accordingly, in order to show that the distance in $L^2\left((\R^s)^N\right)$ between $h(\A^*)$ and $\{h(\A^\lambda)\}_{\lambda\in\Lambda}$ is strictly positive, it suffices to show that the distance in $\T$ between $\A^*$ and
$\{\A^\lambda\}_{\lambda\in\Lambda}$ is strictly positive, or equivalently, that the distance between the matrix $[\A^*]$ and the family of matrices $\{[\A^\lambda]\}_{\lambda\in\Lambda}$ is strictly positive.
This however is a direct implication of the assumption $\forall{\lambda\in\Lambda}:rank([\A^\lambda])<rank([\A^*])$.
\end{proof}

\section{Derivation of Hypotheses Space} \label{app:hypo_space}

In order to keep the body of the paper at a reasonable length, the presentation of our hypotheses space (eq.~\ref{eq:score}) in sec.~\ref{sec:cac} did not provide the grounds for its definition.
In this appendix we derive the hypotheses space step by step.
After establishing basic preliminaries on the topic of $L^2$ spaces, we utilize the notion of tensor products between such spaces to reach a universal representation as in eq.~\ref{eq:score} but with $M\to\infty$.
We then make use of empirical studies characterizing the statistics of natural images, to argue that in practice a moderate value of~$M$ ($M\in\Omega(100)$) suffices.

\subsection{Preliminaries on $\mathbf{L^2}$ Spaces} \label{app:hypo_space:preliminaries}

When dealing with functions over scalars, vectors or collections of vectors, we consider $L^2$ spaces, or more formally, the Hilbert spaces
of Lebesgue measurable square-integrable real functions equipped with standard (point-wise) addition and scalar multiplication, as well as the inner-product defined by integral over point-wise multiplication.
The topic of $L^2$ function spaces lies at the heart of functional analysis, and requires basic knowledge in measure theory.  
We present here the bare necessities required to follow this appendix, referring the interested reader to~\cite{rudin1991functional} for a more comprehensive introduction. 

For our purposes, it suffices to view an $L^2$ space as a vector space of all functions $f$ satisfying $\int f^2<\infty$.  
This vector space is infinite dimensional, and a set of functions $\F \subset L^2$ is referred to as \emph{total} if the closure of its span covers the entire space, i.e. if for any function $g \in L^2$ and $\epsilon>0$, there exist functions $f_1{\ldots}f_K\in\F$ and coefficients $c_1{\ldots}c_K\in\R$ such that $\int |\sum_{i=1}^K c_i \cdot f_i - g|^2<\epsilon$.  
$\F$ is regarded as \emph{linearly independent} if all of its finite subsets are linearly independent, i.e. for any $f_1{\ldots}f_K\in\F$, $f_i\neq f_j$, and $c_1{\ldots}c_K\in\R$, if $\sum_{i=1}^K c_i \cdot f_i = 0$ then $c_1 = \cdots = c_K = 0$.  
A non-trivial result states that $L^2$ spaces in general must contain total and linearly independent sets, and moreover, for any $s \in \N$, $L^2(\R^s)$ contains a \emph{countable} set of this type.  
It seems reasonable to draw an analogy between total and linearly independent sets in $L^2$ space, and bases in a finite dimensional vector space.  
While this analogy is indeed appropriate from our perspective, total and linearly independent sets are not to be confused with \emph{bases} for $L^2$ spaces, which are typically defined to be orthonormal.

It can be shown (see for example~\cite{Hackbusch-book}) that for any natural numbers $s$ and $N$, if $\{f_d(\x)\}_{d\in\N}$ is a total or a linearly independent set in $L^2(\R^s)$, then $\{(\x_1,\ldots,\x_N)\mapsto\prod_{i=1}^N f_{d_i}(\x_i) \}_{d_1{\ldots}d_N\in\N}$, the induced point-wise product functions on $(\R^s)^N$, form a set which is total or linearly independent, respectively, in $L^2\left((\R^s)^N\right)$.  
As we now briefly outline, this result actually emerges from a deep relation between tensor products and Hilbert spaces.  
The definitions given in sec.~\ref{sec:preliminaries} for a tensor, tensor space, and tensor product, are actually concrete special cases of much deeper, abstract algebraic concepts.  
A more formal line of presentation considers multiple vector spaces $V_1{\ldots}V_N$, and defines their tensor product space $V_1{\otimes\cdots\otimes}V_N$ to be a specific quotient space of the space freely generated by their Cartesian product set.  
For every combination of vectors $\vv^{(i)} \in V_i$, $i\in[N]$, there exists a corresponding element $\vv^{(1)}{\otimes\cdots\otimes}\vv^{(N)}$ in the tensor product space, and moreover, elements of this form span the entire space.  
If $V_1{\ldots}V_N$ are Hilbert spaces, it is possible to equip $V_1{\otimes\cdots\otimes}V_N$ with a natural inner-product operation, thereby turning it too into a Hilbert space.  
It may then be shown that if the sets $\{\vv^{(i)}_\alpha\}_\alpha \subset V_i$, $i\in[N]$, are total or linearly independent, elements of the form $\vv^{(1)}_{\alpha_1}\otimes\cdots\otimes\vv^{(N)}_{\alpha_N}$ are total or linearly independent, respectively, in $V_1{\otimes\cdots\otimes}V_N$.  
Finally, when the underlying Hilbert spaces are $L^2(\R^s)$, the point-wise product mapping $f_1(\x){\otimes\cdots\otimes}f_N(\x)\mapsto\prod_{i=1}^N f_i(\x_i)$ from the tensor product space $\left(L^2(\R^s)\right)^{\otimes N}:=L^2(\R^s){\otimes\cdots\otimes}L^2(\R^s)$ to $L^2\left((\R^s)^N\right)$, induces an isomorphism of Hilbert spaces.

\subsection{Construction} \label{app:hypo_space:construction}

Recall from sec.~\ref{sec:cac} that our instance space is defined as $\X:=(\R^s)^N$, in accordance with the common practice of representing natural data through ordered local structures (for example images are often represented through small patches around their pixels).
We classify instances into categories $\Y:=\{1{\ldots}Y\}$ via maximization of per-label score functions $\{h_y:(\R^s)^N\to\R\}_{y\in\Y}$.
Our hypotheses space $\HH$ is defined to be the subset of $L^2\left((\R^s)^N\right)$ from which score functions may be taken.

In app.~\ref{app:hypo_space:preliminaries} we stated that if $\{f_d(\x)\}_{d\in\N}$ is a total set in $L^2(\R^s)$, i.e. if every function in $L^2(\R^s)$ can be arbitrarily well approximated by a linear combination of a finite subset of $\{f_d(\x)\}_{d\in\N}$, then the point-wise products $\{(\x_1,\ldots,\x_N)\mapsto\prod_{i=1}^N f_{d_i}(\x_j) \}_{d_1,\ldots,d_N\in\N}$ form a total set in $L^2\left((\R^s)^N\right)$.  
Accordingly, in a universal hypotheses space $\HH=L^2\left((\R^s)^N\right)$, any score function $h_y$ may be arbitrarily well approximated by finite linear combinations of such point-wise products.  
A possible formulation of this would be as follows.  
Assume we are interested in $\epsilon$-approximation of the score function $h_y$, and consider a formal tensor $\A^y$ having $N$ modes and a countable infinite dimension in each mode $i\in[N]$, indexed by $d_i\in\N$.  
Then, there exists such a tensor, with all but a finite number of entries set to zero, for which:
\be
h_{y}\left(\x_1,\ldots,\x_N\right) \approx \sum_{d_1{\ldots}d_N\in\N} A_{d_1,\ldots,d_N}^{y}\prod_{i=1}^{N} f_{d_i}(\x_i)
\label{eq:score_infinity}
\ee

Given that the set of functions $\{f_d(\x)\}_{d\in\N}{\subset}L^2(\R^s)$ is total, eq.~\ref{eq:score_infinity} defines a universal hypotheses space.
There are many possibilities for choosing a total set of functions.  
Wavelets are perhaps the most obvious choice, and were indeed used in a deep network setting by~\cite{Bruna:2012vu}.  
The special case of Gabor wavelets has been claimed to induce features that resemble representations in the visual cortex~(\cite{Serre:2005bd}).  
Two options we pay special attention to due to their importance in practice are:
\begin{itemize}
\item
\emph{Gaussians} (with diagonal covariance):
\be
f_\theta(\x) = \mathcal{N} \left(\x;\mubf,diag(\sigmabf^2) \right)
\label{eq:rep_gaussians}
\ee
where $\theta=(\mubf\in\R^s,\sigmabf^2\in\R^s_{++})$.
\item 
\emph{Neurons}:
\be
f_\theta(\x) = \sigma\left(\x^\top\w+b\right)
\label{eq:rep_neurons}
\ee
where $\theta=(\w\in\R^s,b\in\R)$ and $\sigma$ is a point-wise non-linear activation such as threshold $\sigma(z)=\indc{z>0}$, rectified linear unit (ReLU) $\sigma(z)=\max\{z,0\}$ or sigmoid $\sigma(z)=1/(1+e^{-z})$.
\end{itemize}
In both cases, there is an underlying parametric family of functions $\F=\left\{ f_{\theta}:\R^{s}\to\R\right\} _{\theta\in\Theta}$ of which a countable total subset may be chosen.  
The fact that Gaussians as above are total in $L^2(\R^s)$ has been proven in~\cite{Girosi:1990gf}, and is a direct corollary of the Stone-Weierstrass theorem.  
To achieve countability, simply consider Gaussians with rational parameters (mean and variances).  
In practice, the choice of Gaussians (with diagonal covariance) give rises to a ``similarity'' operator as described by the SimNet architecture~(\cite{simnets1,simnets2}).
For the case of neurons we must restrict the domain $\R^s$ to some bounded set, otherwise the functions are not integrable.  
This however is not a limitation in practice, and indeed neurons are widely used across many application domains.  
The fact that neurons are total has been proven in~\cite{Cybenko:1989fm} and~\cite{Hornik:1989fr} for threshold and sigmoid activations.  
More generally, it has been proven in~\cite{Stinchcombe:1989kw} for a wide class of activation functions, including linear combinations of ReLU.  
See~\cite{Pinkus:1999gk} for a survey of such results.  
For countability, we may again restrict parameters (weights and bias) to be rational.

In the case of Gaussians and neurons, we argue that a \emph{finite} set of functions suffices, i.e. that it is possible to choose $f_{\theta_1}{\ldots}f_{\theta_M}\in\F$ that will suffice in order to represent score functions required for natural tasks.  
Moreover, we claim that $M$ need not be large (e.g. on the order of~100).  
Our argument relies on statistical properties of natural images, and is fully detailed in app.~\ref{app:hypo_space:finite_bases}.  
It implies that under proper choice of $\{f_{\theta_d}(\x)\}_{d\in[M]}$, the finite set of point-wise product functions 
$\{(\x_1,\ldots,\x_N)\mapsto\prod_{i=1}^N f_{\theta_{d_i}}(\x_i) \}_{d_1,\ldots,d_N\in[M]}$ spans the score functions of interest, and we may define for each label $y$ a tensor $\A^y$ of order $N$ and dimension $M$ in each mode, such that:
\be
h_{y}\left(\x_1,\ldots,\x_N\right)=\sum_{d_1,\ldots,d_N=1}^M\A_{d_1,\ldots,d_N}^{y}\prod_{i=1}^{N} f_{\theta_{d_i}}(\x_i)
\tag{\ref*{eq:score}}
\ee
which is exactly the hypotheses space presented in sec.~\ref{sec:cac}.
Notice that if $\{f_{\theta_{d}}(\x)\}_{d\in[M]}{\subset}L^2(\R^s)$ are linearly independent (there is no reason to choose them otherwise), then so are the product functions $\{(\x_1,\ldots,\x_N)\mapsto\prod_{i=1}^N f_{\theta_{d_i}}(\x_i)\}_{d_1,\ldots,d_N\in[M]}{\subset}L^2\left((\R^s)^N\right)$ (see app.~\ref{app:hypo_space:preliminaries}), and a score function $h_y$ uniquely determines the coefficient tensor $\A^y$.  
In other words, two score functions $h_{y,1}$ and $h_{y,2}$ are identical if and only if their coefficient tensors $\A^{y,1}$ and $\A^{y,2}$ are the same.  

\subsection{Finite Function Bases for Classification of Natural Data} \label{app:hypo_space:finite_bases}

In app.~\ref{app:hypo_space:construction} we laid out the framework of classifying instances in the space $\X:=\left\{(\x_1,\ldots,\x_N):\x_i\in\R^s\right\}=(\R^s)^N$ into labels $\Y:=\{1,\ldots,Y\}$ via maximization of per-label score functions $h_y:\X\to\R$: 
$$\hat{y}(\x_1,\ldots,\x_N)=\argmax_{y\in\Y}h_y(\x_1,\ldots,\x_N)$$
where $h_y(\x_1,\ldots,\x_N)$ is of the form:
\be
h_{y}\left(\x_1,\ldots,\x_N\right)=\sum_{d_1,\ldots,d_N=1}^M A_{d_1,\ldots,d_N}^{y}\prod_{i=1}^{N} f_{\theta_{d_i}}(\x_i)
\tag{\ref*{eq:score}}
\ee
and $\{f_{\theta}\}_{d\in[M]}$ are selected from a parametric family of functions $\F=\left\{ f_{\theta}:\R^{s}\to\R\right\}_{\theta\in\Theta}$.  
For universality, i.e.~for the ability of score functions $h_y$ to approximate any function in $L^2(\X)$ as $M\to\infty$, we required that it be possible to choose a countable subset of $\F$ that is total in $L^2(\R^s)$.  
We noted that the families of Gaussians (eq.~\ref{eq:rep_gaussians}) and neurons (eq.~\ref{eq:rep_neurons}) meet this requirement.

In this subsection we formalize our argument that a \emph{finite} value for $M$ is sufficient when $\X$ represents natural data, and in particular, natural images.  
Based on empirical studies characterizing the statistical properties of natural images, and in compliance with the number of channels in a typical convolutional network layer, we find that $M$ on the order of~100 typically suffices.

\medskip

Let $\D$ be a distribution of labeled instances $(X,\bar{y})$ over $\X\times\Y$ (we use bar notation to distinguish the label $\bar{y}$ from the running index $y$), and $\D_\X$ be the induced marginal distribution of instances $X$ over $\X$.  
We would like to show, given particular assumptions on $\D$, that there exist functions $f_{\theta_1},\ldots,f_{\theta_M}\in\F$ and tensors $\A^1,\ldots,\A^Y$ of order $N$ and dimension $M$ in each mode, such that the score functions $h_y$ defined in eq.~\ref{eq:score} achieve low classification error:
\be
L_\D^{0-1} (h_1,\ldots,h_Y) := \EE_{(X,\bar{y})\sim\D} \left[ \indc{\bar{y}\neq\argmax_{y\in\Y}h_y(X)} \right]
\label{eq:class_error}
\ee
$\indc{\cdot}$ here stands for the indicator function, taking the value $1$ when its argument is true, and $0$ otherwise.

Let $\{h_y^*\}_{y\in\Y}$ be a set of ``ground truth'' score functions for which optimal prediction is achieved, or more specifically, for which the expected hinge-loss (upper bounds the 0-1 loss) is minimal:
$$ (h_1^*,\ldots,h_Y^*) = \argmin_{h'_1,\ldots,h'_Y:\X\to\R} L_\D^{hinge} (h'_1,\ldots,h'_Y) $$
where:
\be
L_\D^{hinge} (h'_1,\ldots,h'_Y):=\EE_{(X,\bar{y})\sim\D}\left[ \max_{y\in\Y} \left\{\indc{y\neq\bar{y}}+h'_y(X)\right\}-h'_{\bar{y}}(X) \right]
\label{eq:expected_hinge_loss}
\ee
Our strategy will be to select score functions $h_y$ of the format given in eq.~\ref{eq:score}, that approximate $h_y^*$ in the sense of low expected maximal absolute difference:
\be
\E:=\EE_{X\sim\D_\X}\left[\max_{y\in\Y}\abs{h_y(X)-h_y^*(X)}\right]
\label{eq:score_approx_error}
\ee
We refer to $\E$ as the \emph{score approximation error} obtained by $h_y$.  
The 0-1 loss of $h_y$ with respect to the labeled example $(X,\bar{y})\in\X\times\Y$ is bounded as follows:
\beas
&&\indc{\bar{y}\neq\argmax_{y\in\Y}h_y(X)} \leq \max_{y\in\Y} \left\{\indc{y\neq\bar{y}}+h_y(X)\right\} -h_{\bar{y}}(X) \\
&&\qquad = \max_{y\in\Y} \left\{\indc{y\neq\bar{y}}+h_y^*(X)+h_y(X)-h_y^*(X)\right\}-h_{\bar{y}}^*(X)+h_{\bar{y}}^*(X)-h_{\bar{y}}(X) \\
&&\qquad \leq \max_{y\in\Y} \left\{\indc{y\neq\bar{y}}+h_y^*(X)\right\}-h_{\bar{y}}^*(X)+\max_{y\in\Y} \left\{h_y(X)-h_y^*(X)\right\}+h_{\bar{y}}^*(X)-h_{\bar{y}}(X) \\
&&\qquad \leq \max_{y\in\Y} \left\{\indc{y\neq\bar{y}}+h_y^*(X)\right\}-h_{\bar{y}}^*(X)+2\max_{y\in\Y} \left\{\abs{h_y(X)-h_y^*(X)}\right\}
\eeas
Taking expectation of the first and last terms above with respect to $(X,\bar{y})\sim\D$, and recalling the definitions given in eq.~\ref{eq:class_error},~\ref{eq:expected_hinge_loss} and~\ref{eq:score_approx_error}, we get:
$$ L_\D^{0-1} (h_1,\ldots,h_Y) \leq L_\D^{hinge} (h_1^*,\ldots,h_Y^*)+2\E $$
In words, the classification error of the score functions $h_y$ is bounded by the optimal expected hinge-loss plus a term equal to twice their score approximation error.  
Recall that we did not constrain the optimal score functions $h_y^*$ in any way.  
Thus, assuming a label is deterministic given an instance, the optimal expected hinge-loss is essentially zero, and the classification error of $h_y$ is dominated by their score approximation error $\E$ (eq.~\ref{eq:score_approx_error}).  
Our problem thus translates to showing that $h_y$ can be selected such that $\E$ is small.

At this point we introduce our main assumption on the distribution $\D$, or more specifically, on the marginal distribution of instances $\D_\X$.  
According to various studies, in natural settings, the marginal distribution of individual vectors in $\X$, e.g.~of small patches in images, may be relatively well captured by a Gaussian Mixture Model (\emph{GMM}) with a moderate number (on the order of~100 or less) of distinct components.  
For example, it was shown in~\cite{Zoran:2012wu} that natural image patches of size $2{\times}2$, $4{\times}4$, $8{\times}8$ or~$16 {\times}16$, can essentially be modeled by GMMs with $64$ components (adding more components barely improved the log-likelihood).  
This complies with the common belief that a moderate number of low-level templates suffices in order to model the vast majority of local image patches.  
Following this line, we model the marginal distribution of $\x_i$ with a GMM having $M$ components with means $\mubf_1{\ldots}\mubf_M\in\R^s$.  
We assume that the components are well localized, i.e. that their standard deviations are small compared to the distances between means, and also compared to the variation of the target functions $h^*_y$.  
In the context of images for example, the latter two assumptions imply that a local patch can be unambiguously assigned to a template, and that the assignment of patches to templates determines the class of an image.  
Returning to general instances $X$, their probability mass will be concentrated in distinct regions of the space $\X$, in which for every $i\in[N]$, the vector $\x_i$ lies near $\mubf_{c_i}$ for some $c_i\in[M]$.  
The score functions $h^*_y$ are approximately constant in each such region.  
It is important to stress here that we do \emph{not} assume statistical independence of $\x_i$'s, only that their possible values can be quantized into $M$ templates $\mubf_1,\ldots,\mubf_M$.

Under our idealized assumptions on $\D_\X$, the expectation in the score approximation error $\E$ can be discretized as follows:
\be
\E:=\EE_{X\sim\D_\X}\left[\max_{y\in\Y} \abs{h_y(X)-h_y^*(X)} \right]= 
\sum_{c_1,\ldots,c_N=1}^M P_{c_1,\ldots,c_N} \max_{y\in\Y}\abs{h_y(\M_{c_1,\ldots,c_N})-h_y^*(\M_{c_1,\ldots,c_N})}
\label{eq:score_approx_error_discrete}
\ee
where $\M_{c_1,\ldots,c_N}:=(\mubf_{c_1},\ldots,\mubf_{c_N})$ and $P_{c_1,\ldots,c_N}$ stands for the probability that $\x_i$ lies near $\mubf_{c_i}$ for every $i\in[N]$ ($P_{c_1,\ldots,c_N}{\geq}0,~\sum_{c_1,\ldots,c_N}P_{c_1,\ldots,c_N}=1$).  

We now turn to show that $f_{\theta_1}{\ldots}f_{\theta_M}$ can be chosen to separate GMM components, i.e. such that for every $c,d\in[M]$, $f_{\theta_d}(\mubf_c)\neq0$ if and only if $c=d$.  
If the functions $f_\theta$ are Gaussians (eq.~\ref{eq:rep_gaussians}), we can simply set the mean of $f_{\theta_d}$ to $\mubf_d$, and its standard deviations to be low enough such that the function effectively vanishes at $\mubf_c$ when $c \neq d$.
If $f_\theta$ are neurons (eq.~\ref{eq:rep_neurons}), an additional requirement is needed, namely that the GMM component means $\mubf_1{\ldots}\mubf_M$ be linearly separable.  
In other words, we require that for every $d\in[M]$, there exist $\w_d\in\R^s$ and $b_d\in\R$ for which $\w_d^\top\mubf_c+b_d$ is positive if $c=d$ and negative otherwise.  
This may seem like a strict assumption at first glance, but notice that the dimension $s$ is often as large, or even larger, then the number of components $M$.  
In addition, if input vectors $\x_i$ are normalized to unit length (a standard practice with image patches for example), $\mubf_1{\ldots}\mubf_M$ will also be normalized, and thus linear separability is trivially met.  
Assuming we have linear separability, one may set $\theta_d=(\w_d,b_d)$, and for threshold or ReLU activations we indeed get $f_{\theta_d}(\mubf_c)\neq0{\iff}c=d$.
With sigmoid activations, we may need to scale $(\w_d,b_d)$ so that $\w_d^\top\mubf_c+b_d\ll0$ when $c{\neq}d$, and that would ensure that in this case $f_{\theta_d}(\mubf_c)$ effectively vanishes.

Assuming we have chosen $f_{\theta_1}{\ldots}f_{\theta_M}$ to separate GMM components, and plugging-in the format of $h_y$ given in eq.~\ref{eq:score}, we get the following convenient form for $h_y(\M_{c_1,\ldots,c_N})$:
$$ h_y(\M_{c_1,\ldots,c_N}) = A_{c_1,\ldots,c_N}^{y}\prod_{i=1}^{N} f_{\theta_{c_i}}(\mubf_{c_i}) $$
Assigning the coefficient tensors through the following rule:
$$ A^y_{c_1,\ldots,c_N}=\frac{h_y^*(\M_{c_1,\ldots,c_N})}{\prod_{i=1}^{N} f_{\theta_{c_i}}(\mubf_{c_i})} $$
implies:
$$ h_y(\M_{c_1,\ldots,c_N}) = h_y^*(\M_{c_1,\ldots,c_N}) $$
for every $y\in\Y$ and $c_1{\ldots}c_N\in[M]$.  
Plugging this into eq.~\ref{eq:score_approx_error_discrete}, we get a score approximation error of zero.

To recap, we have shown that when the parametric functions $f_\theta$ are Gaussians (eq.~\ref{eq:rep_gaussians}) or neurons (eq.~\ref{eq:rep_neurons}), not only are the score functions $h_y$ given in eq.~\ref{eq:score} universal when $M\to\infty$ (see app.~\ref{app:hypo_space:construction}), but they can also achieve zero classification error (eq.~\ref{eq:class_error}) with a moderate value of $M$ (on the order of~100) if the underlying data distribution $\D$ is ``natural''.  
In this context, $\D$ is regarded as natural if it satisfies two conditions.  
The first, which is rather mild, requires that a label be completely determined by the instance.  
For example, an image will belong to one category with probability one, and to the rest of the categories with probability zero.  
The second condition, which is far more restrictive, states that input vectors composing an instance can be quantized into a moderate number ($M$) of templates.  
The assumption that natural images exhibit this property is based on various empirical studies where it is shown to hold approximately.  
Since it does not hold exactly, our analysis is approximate, and its implication in practice is that the classification error introduced by constraining score functions to have the format given in eq.~\ref{eq:score}, is negligible compared to other sources of error (factorization of the coefficient tensors, finiteness of training data and difficulty in optimization).

\section{Related Work}\label{app:related_work}

The classic approach for theoretically analyzing the power of depth focused on investigation of the computational complexity of \emph{boolean circuits}.  
An early result, known as the ``exponential efficiency of depth'', may be summarized as follows: for every integer $k$, there are boolean functions that can be computed by a circuit comprising alternating layers of AND and OR gates which has depth $k$ and polynomial size, yet if one limits the depth to $k-1$ or less, an exponentially large circuit is required.  
See~\cite{Sipser83} for a formal statement of this classic result. 
Recently, \cite{rossman2015average}~have established a somewhat stronger result, showing cases where not only are polynomially wide shallow boolean circuits incapable of exact realization, but also of approximation (i.e. of agreeing with the target function on more than a specified fraction of input combinations).
Other classical results are related to \emph{threshold circuits}, a class of models more similar to contemporary neural networks than boolean circuits.
Namely, they can be viewed as neural networks where each neuron computes a weighted sum of its inputs (possibly including bias), followed by threshold activation ($\sigma(z)=\1[z\geq0]$).
For threshold circuits, the main known result in our context is the existence of functions that separate depth 3 from depth 2 (see~\cite{hajnal1987threshold} for a statement relating to exact realization, and the techniques in~\cite{maass1994comparison,martens2013representational} for extension to approximation).

More recent studies focus on \emph{arithmetic circuits}~(\cite{shpilka2010arithmetic}), whose nodes typically compute either a weighted sum or a product of their inputs
\footnote{
There are different definitions for arithmetic circuits in the literature.  
We adopt the definition given in~\cite{martens2014expressive}, under which an arithmetic circuit is a directed acyclic graph, where nodes
with no incoming edges correspond to inputs, nodes with no outgoing edges correspond to outputs, and the remaining nodes are either labeled as ``sum'' or as ``product''.  
A product node computes the product of its child nodes.  
A sum node computes a weighted sum of its child nodes, where the weights are parameters linked to its incoming edges.
}
(besides their role in studying expressiveness, deep networks of this class have been shown to support provably optimal training~\cite{livni2013algorithm}).  
A~special case of this are the Sum-Product Networks (\emph{SPNs}) presented in~\cite{Poon-Domingos2011}.  
SPNs are a class of deep generative models designed to efficiently compute probability density functions.  
Their summation weights are typically constrained to be non-negative (such an arithmetic circuit is called \emph{monotone}), and in addition, in order for them to be valid (i.e. to be able to compute probability density functions), additional architectural constraints are needed (e.g. decomposability and completeness).  
The most widely known theoretical arguments regarding the efficiency of depth in SPNs were given in~\cite{bengio2011shallow}.  
In this work, two specific families of SPNs were considered, both comprising alternating sum and product layers~--~a family $\F$ whose nodes form a full binary tree, and a family $\G$ with $n$ nodes per layer (excluding the output), each connected to $n-1$ nodes in the preceding layer.  
The authors show that functions implemented by these networks require an exponential number of nodes in order to be realized by shallow (single hidden-layer networks).  
The limitations of this work are twofold.  
First, as the authors note themselves, it only analyzes the ability of shallow networks to realize \emph{exactly} functions generated by deep 
networks, and does not provide any result relating to approximation.  
Second, the specific SPN families considered in this work are not universal hypothesis classes and do not resemble networks used in practice.  
Recently, \cite{martens2014expressive}~proved that there exist functions which can be efficiently computed by decomposable and complete (D\&C) SPNs of depth $d+1$, yet require a D\&C SPN of depth $d$ or less to have super-polynomial size for exact realization. 
This analysis only treats approximation in the limited case of separating depth 4 from depth 3 (D\&C) SPNs.  
Additionally, it only deals with specific separating functions, and does not convey information regarding how frequent these are.  
In other words, according to this analysis, it may be that almost all functions generated by deep networks \emph{can} be efficiently realized by shallow networks, and there are only few pathological functions for which this does not hold.  
A further limitation of this analysis is that for general $d$, the separation between depths $d+1$ and $d$ is based on a multilinear circuit result by~\cite{raz2009lower}, that translates into a network that once again does not follow the common practices of deep learning.

There have been recent attempts to analyze the efficiency of network depth in other settings as well. 
The most commonly used type of neural networks these days includes neurons that compute a weighted sum of their inputs (with bias) followed by Rectified Linear Unit (ReLU) activation ($\sigma(z)=\max\{0,z\}$).
\cite{pascanu2013number}~and~\cite{montufar2014number} study the number of linear regions that may be expressed by such networks as a function of their depth and width, thereby showing existence of functions separating deep from shallow (depth 2) networks. 
\cite{telgarsky2015representation}~shows a simple construction of a depth $d$ width 2 ReLU network that operates on one-dimensional inputs, realizing a function that cannot be approximated by ReLU networks of depth $o(d/\log{d})$ and width polynomial in $d$.
\cite{eldan2015power}~provides functions expressible by ReLU networks of depth 3 and polynomial width, which can only be approximated by a depth 2 network if the latter's width is exponential.
The result in this paper applies not only to ReLU activation, but also to the standard sigmoid ($\sigma(z)=1/(1+e^{-z})$), and more generally, to any universal activation (see assumption 1 in~\cite{eldan2015power}).
\cite{bianchini2014complexity}~also considers different types of activations, studying the topological complexity (through Betti numbers) of decision regions as a function of network depth, width and activation type.
The results in this paper establish the existence of deep vs. shallow separating functions only for the case of polynomial activation. 
While the above works do address more conventional neural networks, they do not account for the structure of convolutional networks~--~the most successful deep learning architectures to date, and more importantly, they too prove only existence of \emph{some} separating functions, without providing any insight as to how frequent these are.

We are not the first to incorporate ideas from the field of tensor analysis into deep learning.  
\cite{Socher:2013tz},~\cite{Yu:2012wy},~\cite{Setiawan:2015vn},~and~\cite{Hutchinson:2013bt} all proposed different neural network architectures that include tensor-based elements, and exhibit various advantages in terms of expressiveness and/or ease of training.  
In~\cite{Janzamin:2015uz}, an alternative algorithm for training neural networks is proposed, based on tensor decomposition and Fourier analysis, with proven generalization bounds.
In~\cite{Novikov:2014wr},~\cite{Anandkumar:2014uc},~\cite{Yang:2015wo}  and~\cite{LeSong:2013vm}, algorithms for tensor decompositions are used to estimate parameters of different graphical models.  
Notably,~\cite{LeSong:2013vm} uses the relatively new Hierarchical Tucker decomposition (\cite{Hackbusch:2009jj}) that we employ in our work, with certain similarities in the formulations.  
The works differ considerably in their objectives though: while~\cite{LeSong:2013vm} focuses on the proposal of a new training algorithm, our purpose in this work is to analyze the expressive efficiency of networks and how that depends on depth.  Recently,~\cite{Lebedev:2014vb} modeled the filters in a convolutional network as four dimensional tensors, and used the CP decomposition to construct an efficient and accurate approximation.  
Another work that draws a connection between tensor analysis and deep learning is the recent study presented in~\cite{Haeffele:2015vz}.  This work shows that with sufficiently large neural networks, no matter how training is initialized, there exists a local optimum that is accessible with gradient descent, and this local optimum is approximately equivalent to the global optimum in terms of objective value.

\section{Computation in Log-Space with SimNets} \label{app:simnets}

A practical issue one faces when implementing arithmetic circuits is the numerical instability of the product operation~--~a product node with a large number of inputs is easily susceptible to numerical overflow or underflow.  
A common solution to this is to perform the computations in log-space, i.e. instead of computing activations we compute their $\log$.  
This requires the activations to be non-negative to begin with, and alters the sum and product operations as follows.  
A product simply turns into a sum, as $\log\prod_i\alpha_i = \sum_i\log\alpha_i$.  
A sum becomes what is known as \emph{log-sum-exp} or \emph{softmax}: $\log\sum_i\alpha_i = \log\sum_i\exp(\log\alpha_i)$.  

Turning to our networks, the requirement that all activations be non-negative does not limit their universality. 
The reason for this is that the functions $f_\theta$ are non-negative in both cases of interest~--~Gaussians (eq.~\ref{eq:rep_gaussians}) and neurons (eq.~\ref{eq:rep_neurons}).
In addition, one can always add a common offset to all coefficient tensors $\A^y$, ensuring they are positive without affecting classification.
Non-negative decompositions (i.e. decompositions with all weights holding non-negative values) can then be found, leading all network activations to be non-negative.
In general, non-negative tensor decompositions may be less efficient than unconstrained decompositions, as there are cases where a non-negative tensor supports an unconstrained decomposition that is smaller than its minimal non-negative decomposition.
Nevertheless, as we shall soon see, these non-negative decompositions translate into a proven architecture, which was demonstrated to achieve comparable performance to state of the art convolutional networks, thus in practice the deterioration in efficiency does not seem to be significant.

Na\"{\i}vely implementing CP or HT model (fig.~\ref{fig:cp_model} or~\ref{fig:ht_model} respectively) in log-space translates to $\log$ activation following the locally connected linear transformations (convolutions if coefficients are shared, see sec.~\ref{sec:shared}), to product pooling turning into sum pooling, and to $\exp$ activation following the pooling. 
However, applying $\exp$ and $\log$ activations as just described, without proper handling of the inputs to each computational layer, would not result in a numerically stable computation
\footnote{
Na\"{\i}ve implementation of softmax is not numerically stable, as it involves storing $\alpha_i = \exp(\log\alpha_i)$ directly.  
This however can be easily corrected by defining $c := \max_i\log\alpha_i$, and computing $\log\sum_i\exp(\log\alpha_i-c)+c$.  
The result is identical, but now we only exponentiate negative numbers (no overflow), with at least one of these numbers equal to zero (no underflow).
}.

The SimNet architecture~(\cite{simnets1,simnets2}) naturally brings forth a numerically stable implementation of our networks.  
The architecture is based on two ingredients~--~a flexible similarity measure and the \emph{MEX} operator:
$$\textrm{MEX}_\beta(\x,\bb) := \frac{1}{\beta} \log\left(\frac{1}{N}\sum_j \exp(\beta (x_j + b_j)) \right)$$
The similarity layer, capable of computing both the common convolutional operator as well as weighted $l_p$~norm, may realize the representation by computing $\log f_\theta(\x_i)$, whereas MEX can naturally implement both log-sum-exp and sum-pooling ($\lim_{\beta\to 0}\textrm{MEX}_\beta(\x,\mathbf{0}) = \textrm{mean}_j\{x_j\}$) in a numerically stable manner.

Not only are SimNets capable of correctly and efficiently implementing our networks, but they have already been demonstrated~(\cite{simnets2}) to perform as well as state of the art convolutional networks on several image recognition benchmarks, and outperform them when computational resources are limited. 

\end{document}